\theoremstyle{plain}
\newtheorem{theorem}{Theorem}
\newtheorem{lemma}[theorem]{Lemma}
\begin{document}
%
\title{Learning with Inadequate and Incorrect Supervision}

\author{\IEEEauthorblockN{Chen Gong\IEEEauthorrefmark{1}, Hengmin Zhang\IEEEauthorrefmark{1}, Jian Yang\IEEEauthorrefmark{1} and Dacheng Tao\IEEEauthorrefmark{2}}
\IEEEauthorblockA{\IEEEauthorrefmark{1}School of Computer Science and Engineering, Nanjing University of Science and Technology, China}
\IEEEauthorblockA{\IEEEauthorrefmark{2}UBTECH Sydney AI Centre, SIT, FEIT, University of Sydney, Australia}
\IEEEauthorblockA{Corresponding author: Chen Gong (E-mail: chen.gong@njust.edu.cn)}}

\maketitle

\begin{abstract}
Practically, we are often in the dilemma that the labeled data at hand are inadequate to train a reliable classifier, and more seriously, some of these labeled data may be mistakenly labeled due to the various human factors. Therefore, this paper proposes a novel semi-supervised learning paradigm that can handle both label insufficiency and label inaccuracy. To address label insufficiency, we use a graph to bridge the data points so that the label information can be propagated from the scarce labeled examples to unlabeled examples along the graph edges. To address label inaccuracy, Graph Trend Filtering (GTF) and Smooth Eigenbase Pursuit (SEP) are adopted to filter out the initial noisy labels. GTF penalizes the $\ell_0$ norm of label difference between connected examples in the graph and exhibits better local adaptivity than the traditional $\ell_2$ norm-based Laplacian smoother. SEP reconstructs the correct labels by emphasizing the leading eigenvectors of Laplacian matrix associated with small eigenvalues, as these eigenvectors reflect real label smoothness and carry rich class separation cues. We term our algorithm as ``\underline{S}emi-supervised learning under \underline{I}nadequate and \underline{I}ncorrect \underline{S}upervision'' (SIIS). Thorough experimental results on image classification, text categorization, and speech recognition demonstrate that our SIIS is effective in label error correction, leading to superior performance to the state-of-the-art methods in the presence of label noise and label scarcity.
\vskip 10pt
\end{abstract}


\IEEEpeerreviewmaketitle

\section{Introduction}
\label{sec:intro}

Practically, it is quite often that the available labeled data are insufficient for training a reliable supervised classifier such as Support Vector Machines (SVM) and Convolutional Neural Networks (CNN). For example, manually annotating web-scale images/texts is intractable because of the unacceptable human labor cost. Acquiring sufficient labeled examples for protein structure categorization is also infeasible as it often takes months of laboratory work for experts to identify a single protein's 3D structure. To make the matter worse, a portion of such limited labeled data are very likely to be mislabeled, which means that the sparse supervision information we have may not be reliable. For example, in crowdsourced image annotation, some of the image labels can be incorrect due to the knowledge or cultural limitation of the annotators. The labeling of protein structure is also error-prone as this process is highly depended on the experience and expertise of labelers working in the biological area.\par

To solve the abovementioned practical problems, this paper studies how to leverage the scarce labeled examples with untrustable labels to build a reliable classifier so that the massive unlabeled examples can be accurately classified. Therefore, two issues are jointly taken into consideration in this paper: one is the insufficiency of labeled examples, and the other is the noise in label space.\par

In fact, Semi-Supervised Learning (SSL) \cite{Goldberg09introductionto} has been widely used to deal with the first issue. SSL aims to predict the labels of a large amount of unlabeled examples given only a few labeled examples, and the algorithms of SSL can be roughly divided into three categories, i.e. \emph{collaboration-based}, \emph{large-margin-based}, and \emph{graph-based}. Collaboration-based methods usually contain multiple learners and they are trained collaboratively to improve the integrated performance on the unlabeled data. Co-training \cite{blum1998combining} and Tri-training \cite{zhou2005tritraining} are representative methodologies belonging to this category. Large-margin-based methods assume that there exists an optimal decision boundary in the low density region between data clusters, so that the margin between the decision boundary and the nearest data points on each side can be maximized. The algorithms based on the large-margin assumption are usually the variants of traditional SVM, such as Semi-Supervised SVM (S3VM) \cite{joachims1999transductive}, Mean S3VM \cite{li2009semi}, and Safe S3VM (S4VM) \cite{li2015towards}. Graph-based methods are usually established on the manifold assumption, namely the entire dataset contains a potential manifold, and the labels of examples should vary smoothly along this manifold. The representative algorithms include Harmonic Functions \cite{Zhu03semisupervisedlearning}, Local and Global Consistency \cite{Zhou03learningwith}, Linear Neighborhood Propagation \cite{wang2008labelTKDE}, and Manifold Regularization \cite{belkin2006manifold}. Although above SSL methods have achieved satisfactory results for different purposes, none of them are applicable to the label noise situations, and their performance will significantly decrease in the presence of mislabeled examples.\par

\begin{figure}[t]
  \centering
  \includegraphics[width=\linewidth]{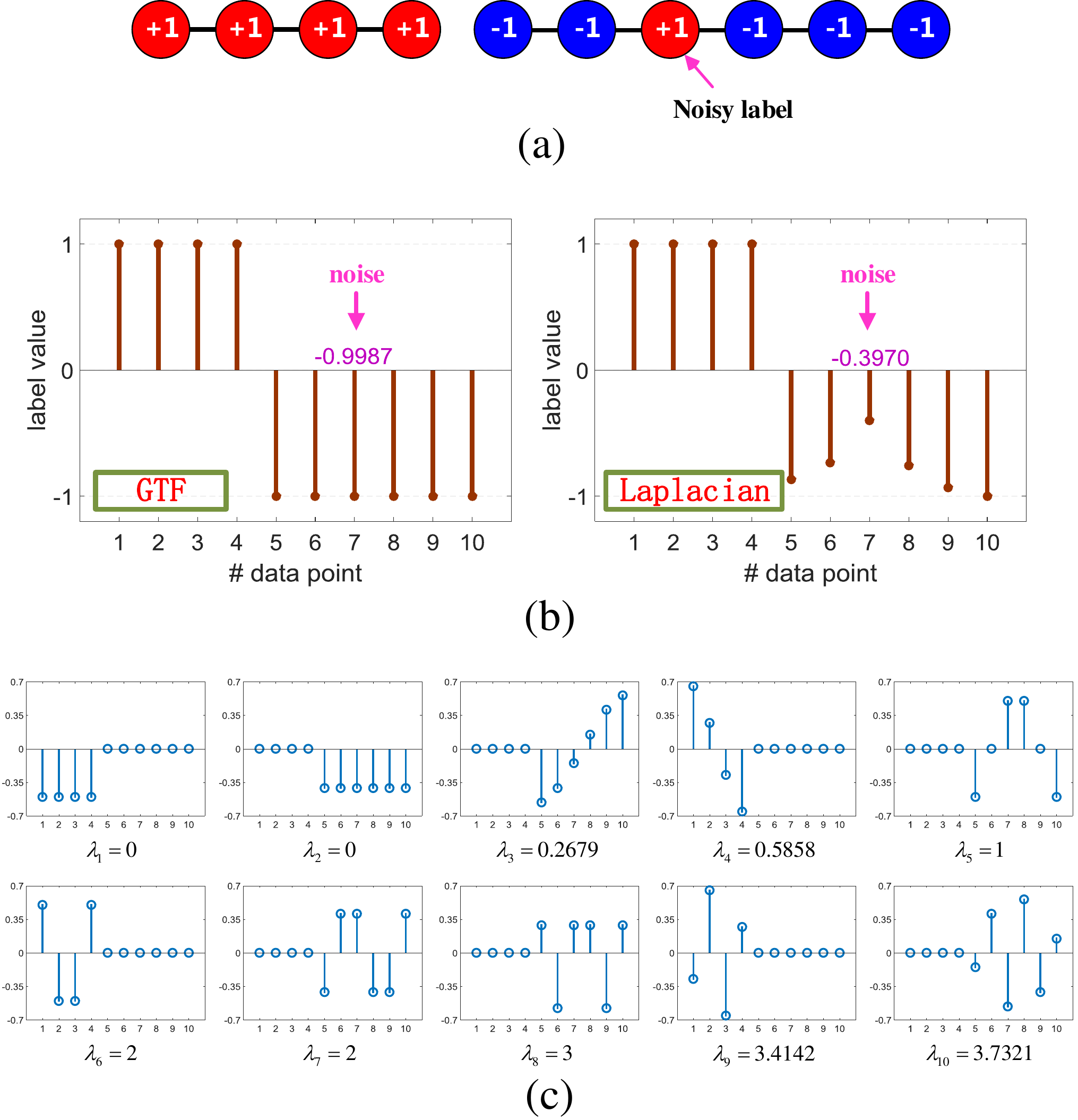}\\
  \caption{Illustration of our motivation. (a) presents an unweighted chain graph with 10 examples, and one of the negative examples is incorrectly labeled as positive. (b) shows the normalized labels of the 10 examples decided by GTF and Laplacian smoother, respectively, in which the noisy label is correspondingly corrected to -0.9987 and -0.3970. (c) plots the spectrum (i.e. eigenvalues and eigenvectors) of the Laplacian matrix associated with the graph in (a).}\label{fig:IdeaIllustration}
\end{figure}

Regarding the issue of label noise, several works have been done recently to prevent the performance degradation caused by such incorrect supervision. By discovering that most of the existing loss functions can be decomposed as a label-independent term plus a label-dependent term, Gao et al. \cite{gao2016risk} and Patrini et al. \cite{patrini2016loss} estimate the unbiased geometric mean of the entire dataset to suppress the negative influence of noisy labels. More generally, a variety of methods have been proposed to adapt the existing loss functions to the corrupted labels via weighting \cite{liu2016classification,natarajan2013learning}, calibration \cite{van2015learning}, or upper bounding \cite{han2016convergence}. Other works towards mitigating the label noise are based on graphical model \cite{xiao2015learning}, boosting \cite{miao2016rboost}, or data cleansing \cite{zhu2006bridging}. However, they are designed for supervised classifier and thus are not suitable for the SSL problem considered in this paper.\par

Therefore, in this paper we aim to design a semi-supervised algorithm that is robust to label noise, so that the unlabeled examples can be accurately classified although the labels at hand might be scarce and inaccurate. Consequently, our method is termed ``\textbf{S}emi-supervised learning under \textbf{I}nadequate and \textbf{I}ncorrect \textbf{S}upervision'' (SIIS). Mathematically, suppose we have $l$ labeled examples $\mathcal{L}\!=\!\left\{ \left( {{\mathbf{x}}_{i}},{{y}_{i}} \right) \right\}_{i=1}^{l}$ and $u$ unlabeled examples $\mathcal{U}\!=\!\left\{ {{\mathbf{x}}_{i}} \right\}_{i=l+1}^{n}$ with $n=l+u$. The labels ${{y}_{i}}$ for $1\!\le\! i\!\le \!l$ take values from $\left\{ -1,1 \right\}$ and some of them may be incorrect, then our goal is to classify the examples in $\mathcal{U}$ based on the inaccurate $\mathcal{L}$. Specifically, we build a graph $\mathcal{G}=\langle \mathcal{V}, \mathcal{E}\rangle$ where $\mathcal{V}$ is the vertex set consisted of all $n$ examples, and $\mathcal{E}$ is the edge set encoding the similarity between these examples. Based on $\mathcal{G}$, we adopt two measures to deal with the possible label noise, namely Graph Trend Filtering (GTF) and Smooth Eigenbase Pursuit (SEP).\par

GTF \cite{wang2016trend} is a statistical method to conduct the nonparametric regression on a graph. Its main idea is to penalize the $\ell_0$ norm of label difference between graph vertices rather than using the usual $\ell_2$ norm-based graph Laplacian smoother \cite{Zhou03learningwith,gong2014fick,gong2015deformed}. Consequently, the label difference between the connected vertices can be exactly zero by employing GTF. In contrast, the $\ell_2$ norm-based Laplacian smoother only decides the vertex difference to be small or large and can hardly set any label difference to exact zero. Therefore, GTF has a stronger power on correcting the noisy labels and achieves better local adaptivity than the traditional Laplacian smoother. In Fig.~\ref{fig:IdeaIllustration}(a), a chain graph with 10 vertices is presented, in which a negative vertex has been mislabeled as positive. Fig.~\ref{fig:IdeaIllustration}(b) shows the labels of the 10 vertices assigned by GTF and Laplacian smoother, respectively. It can be clearly observed that the negative data points, including the original mislabeled vertex, obtain very confident labels that are almost -1 by GTF. In contrast, Laplacian smoother is strongly affected by the label noise and thus the label assignments to the negative examples are significantly deviated from the ideal value -1. Specifically, the initial noisy label +1 is corrected to -0.3970 and -0.9987 by Laplacian smoother and GTF, respectively, which indicates that GTF is more powerful than Laplacian smoother on noisy label correction.\par

SEP stems from the spectral graph theory, which claims that the eigenvectors of graph Laplacian matrix corresponding to the smallest eigenvalues reflect the real underlying smoothness of labels and usually contain clear indication of class separations. For example, Fig.~\ref{fig:IdeaIllustration}(c) plots all 10 eigenvectors of the Laplacian matrix associated with the graph in (a). In Fig.~\ref{fig:IdeaIllustration}(c), the two leading eigenvectors are piecewise constant, which exactly correspond to the correct class separation of the 10 vertices regardless of the initial noisy label. However, when the eigenvalues increase, the corresponding eigenvectors become more and more rugged and the class information turns to be rather unclear. Therefore, the noisy labels can be easily filtered out if the smooth eigenbasis of a graph can be successfully discovered.\par

Thanks to the collaboration of GTF and SEP, our SIIS model performs robustly to the label noise. The experimental results on the datasets from different domains indicate that SIIS can not only effectively correct the corrupted labels on labeled set $\mathcal{L}$, but also achieve higher classification accuracy on unlabeled set $\mathcal{U}$ than the state-of-the-art methods.\par

This is the longer version of our previous conference submission \cite{gong2017ICDMlearning}. Compared to \cite{gong2017ICDMlearning}, this version contains more illustrative presentations, model descriptions, theoretical analyses, and empirical studies.

\section{Related Work}
\label{sec:RelatedWork}
As this paper cares about semi-supervised learning and the issue of label noise, this section will briefly review some representative works on these two topics.

\subsection{Semi-supervised Learning}
As mentioned in the introduction, SSL is specifically proposed for the situation where the labeled examples are scarce while the unlabeled examples are abundant. In SSL, although the unlabeled examples do not have explicit labels, they carry the distribution information of the entire dataset, so they also render the important cues for achieving accurate classification. SSL methods can be collaboration-based, large-margin-based, or graph-based.\par

Collaboration-based methods involve multiple classifiers so that they complement to each other and yield the final satisfactory performance. Blum et al. \cite{blum1998combining} firstly propose a basic co-training framework in which two classifiers exchange their individual most confident predictions in each iteration so that the unlabeled examples can be precisely classified with their interactions. 
Considering that the involved two classifiers may generate contradictory results under some uncertain circumstances, tri-training \cite{zhou2005tritraining} that incorporates three classifiers was proposed to eliminate such ambiguity.\par

Large-margin-based methods suppose that the classes in the example space are well-separated into several clusters \cite{shao2016reliable}, and the decision boundary should fall into a low density region between the clusters. For example, Semi-Supervised SVM (S3VM) \cite{joachims1999transductive} replaces the \emph{hinge loss} adopted by traditional SVM with a novel \emph{hat loss} to penalize the classification error on the unlabeled examples. To guarantee that the unlabeled examples will not hurt the performance, Safe Semi-Supervised SVM (S4VM) \cite{li2015towards} was proposed to simultaneously exploit multiple candidate low-density separators to reduce the risk of identifying a poor separator. Besides, considering that the SVM formulation cannot output the label posterior probability when making classification, entropy regularization \cite{grandvalet2004semi} was proposed by extending the conventional logistic regression to semi-supervised cases. Other works belonging to this type include 
\cite{li2009semi,chapelle2005semi,collobert2006large,ogawa2013infinitesimal}, etc.\par

Graph-based methods usually build a graph to approximate the manifold embedded in the dataset, and require the labels of examples vary smoothly along the manifold. Zhu et al. \cite{Zhu03semisupervisedlearning} and Zhou et al. \cite{Zhou03learningwith} exploited the un-normalized Graph Laplacian and normalized Graph Laplacian, respectively, to describe the smoothness of labels on the graph. Wang et al. \cite{wang2008labelTKDE} assume that a data point can be linearly reconstructed by its neighbors and build a LLE-like graph \cite{roweis2000nonlinear} to model the smoothness. Gong et al. \cite{gong2014fick} formulate the graph-based SSL as the fluid diffusion, and adopt a physical theory to achieve label smoothness. Different from above methods that simply focus on instance-level smoothness, Zhao et al. \cite{zhao2015part} adopt non-negative matrix factorization to discover the part-level structure in the graph. Other typical works include \cite{belkin2006manifold,gong2015deformed,wang2013dynamic,jia2016adaptive,yamaguchi2016camlp}, etc.

\subsection{Label Noise Handling}
The early-stage approaches for addressing the label noise issue usually focus on noise detection and filtering, namely the data are preprocessed to remove the possible noise ahead of conducting the standard algorithms. To this end, some algorithms exploit the neighborhood information \cite{muhlenbach2004identifying,fine1999noise}, while some algorithms utilize the prediction disagreements among an ensemble of classifiers \cite{brodley1999identifying,zhu2003eliminating}.\par

Recently, more efforts have been put to develop the algorithms that are inherently robust to label noise. Under the framework of risk minimization, a series of works \cite{natarajan2013learning,ghosh2015making,van2015learning,liu2016classification,gao2016risk,jindal2016learning} have been done to design various surrogate loss functions, so that the surrogate loss for noisy data is the same as the risk under the original loss for noise-free data. Apart from model designing, there are also some works \cite{patrini2016loss,han2016convergence} targeting to improve the robustness of model optimization. They show that the vanilla stochastic gradient descent method can be made stable to label noise if some minor modifications are made. More detailed survey on the topic of label noise can be found in \cite{frenay2014classification}.\par

So far some preliminary investigations have been done to solve the problem of SSL with deteriorated labels. By imposing the eigen-decomposition on the graph Laplacian matrix, Lu et al. \cite{lu2015noise} designed a new $\ell_1$ norm smoothness and transformed their model to a sparse coding problem. Gu et al. \cite{gu2016robust} utilized self-paced learning to progressively select the labeled examples in a well-organized manner, so that the initial noisy labels can be filtered out. Wang et al. \cite{wang2009labeldiagnosis} eliminated contradictory labels and conducted label inference through a bidirectional and alternating optimization strategy. Yan et al. \cite{yan2016robust} adopted multiple kernel learning and combined the outputs of weak SSL classifiers to approximate the ground-truth labels. However, these methods are not sufficiently robust, namely they cannot generate consistent satisfactory performance in the presence of different levels of label noise.

\section{Our Model}
\label{sec:OurModel}
In our method, we construct a $K$-nearest neighborhood ($K$NN) graph $\mathcal{G}$ over $\mathcal{L}\cup\mathcal{U}$, which is further quantified by the adjacency matrix $\mathbf{W}$. The $(i,j)$-th element of $\mathbf{W}$, i.e. $W_{ij}$, encodes the similarity between the examples $\mathbf{x}_i$ and $\mathbf{x}_j$, which is computed by ${{W}_{ij}}\!=\!\exp \left( -{\left\| {{\mathbf{x}}_{i}}\!-\!{{\mathbf{x}}_{j}} \right\|^{2}}/{(2{{\xi }^{2}})} \right)$ with $\xi $ being the Gaussian kernel width if ${{\mathbf{x}}_{i}}$ and ${{\mathbf{x}}_{j}}$ are linked by an edge in $\mathcal{G}$, and ${{W}_{ij}}\!=\!0$ otherwise. Based upon $\mathbf{W}$, we introduce the diagonal degree matrix ${{\mathbf{D}}_{ii}}\!=\!\sum\nolimits_{j=1}^{n}{{{W}_{ij}}}$ and
graph Laplacian matrix $\mathbf{L}=\mathbf{D}-\mathbf{W}$. Besides, we use an $l$-dimensional vector $\mathbf{y}=(y_1,y_2,\cdots,y_l)^{\top}$ to record the given labels of $l$ initial labeled examples, and employ an $n$-dimensional vector $\mathbf{f}=(f_1,\cdots,f_l,f_{l+1},\cdots,f_n)^{\top}$ with $\{f_i\}_{i=1}^{n}\in \mathbb{R}$ to represent the soft labels of all $n$ examples. Furthermore, we define a $|\mathcal{E}|\times n$ ($|\mathcal{E}|$ is the size of edge set $\mathcal{E}$) matrix $\mathbf{P}$, in which the $k$-th ($1\leq k\leq |\mathcal{E}|$) row corresponds to the edge $k$ that connects $\mathbf{x}_i$ and $\mathbf{x}_j$. Specifically, the $k$-th row is defined by
\begin{equation}\label{eq:DefinitonOfP}
\begin{split}
  & {{\mathbf{P}}_{k,:}}=\left( \begin{matrix}
   0 & \cdots  & {{W}_{ij}} & \cdots  & -{{W}_{ij}} & \cdots  & 0  \\
\end{matrix} \right). \\
 & \ \ \ \ \ \ \ \ \ \ \ \ \ \ \ \ \ \ \ \ \ \ \overset{\uparrow }{\mathop{i}}\,\ \ \ \ \ \ \ \ \ \ \ \ \ \ \overset{\uparrow }{\mathop{j}} \\
\end{split}
\end{equation}
Therefore, the \emph{Graph Trend Filtering Term} (\emph{GTF term}) is expressed as $\left\|\mathbf{Pf}\right\|_0=\sum\nolimits_{(i,j)\in\mathcal{E}}W_{ij}\mathds{1}\llbracket f_i\neq f_j\rrbracket$ with ``$\mathds{1}\llbracket \cdot \rrbracket$'' being the indicator function. Consequently, our model is formulated as
\begin{equation}\label{eq:OurModel1}
  \underset{{{\mathbf{f}}=(f_1,\cdots,f_n)^{\top}}}{\mathop{\min }}\ \left\|\mathbf{Pf}\right\|_0+\alpha\left\|\mathbf{Jf}-\mathbf{y}\right\|_0,
\end{equation}
where $\alpha>0$ is the trade-off parameter, and ``$\left\|\cdot\right\|_0$'' represents the $\ell_0$ norm that counts the non-zero elements in the corresponding vector; $\mathbf{J}$ is an $l\times n$ matrix with the $(i,i)$-th ($i=1,2,\cdots,l$) elements being 1, and the other elements being 0. The first \emph{GTF term} in \eqref{eq:OurModel1} enforces the strongly connected examples to obtain identical labels. The second term is \emph{fidelity term} which requires that the optimized $\mathbf{f}$ on the initial labeled examples should approach to the given labels in $\mathbf{y}$. However, the inconsistency between $f_i$ and $y_i$ is allowed due to the adopted $\ell_0$ norm, as not all the elements in $\mathbf{y}$ are correct and trustable.\par

Furthermore, since the Laplacian matrix $\mathbf{L}$ is semi-positive definite, it can be decomposed as $\mathbf{L}=\bar{\mathbf{U}}\bar{\mathbf{\Sigma}}\bar{\mathbf{U}}^{\top}$ where $\bar{\mathbf{\Sigma}}=diag(\lambda_1,\lambda_2,\cdots,\lambda_n)$ is a diagonal matrix with $0=\lambda_1\leq\lambda_2\leq\cdots\leq\lambda_n$ being the totally $n$ eigenvalues, and $\bar{\mathbf{U}}=\left(\mathbf{U}_1,\mathbf{U}_2,\cdots,\mathbf{U}_n\right)$ contains the $n$ associated eigenvectors. Since $\mathbf{U}_1,\mathbf{U}_2,\cdots,\mathbf{U}_n$ are orthogonal, all possible $\mathbf{f}$ on the graph $\mathcal{G}$ can be represented by $\mathbf{f}=\sum\nolimits_{i=1}^{n}{a_i\mathbf{U}_i}$ where $\left\{a_i\right\}_{i=1}^{n}$ are representation coefficients. According to \cite{Goldberg09introductionto,fergus2009semi}, the first $m$ (typically $m\ll n$) eigenvectors usually depict the label smoothness and convey the real class separation, so they can be employed to reconstruct the optimal $\mathbf{f}$ and meanwhile filter out the incorrect initial labels. Therefore, we may write $\mathbf{f}=\mathbf{Ua}$ in \eqref{eq:OurModel1} where $\mathbf{U}=\left(\mathbf{U}_1,\cdots,\mathbf{U}_m\right)$ is the sub-matrix of $\bar{\mathbf{U}}$ containing the first $m$ columns of $\bar{\mathbf{U}}$, and $\mathbf{a}$ is the corresponding coefficient vector, so we have
\begin{equation}\label{eq:OurModel2}
  \underset{\mathbf{a}=(a_1,\cdots,a_m)^{\top}}{\mathop{\min }}\ \left\|\mathbf{PUa}\right\|_0+\alpha\left\|\mathbf{JUa}-\mathbf{y}\right\|_0+\beta\mathbf{a}^{\top}\mathbf{\Sigma}\mathbf{a},
\end{equation}
where ${\mathbf{\Sigma}}$ is a diagonal matrix with ${\mathbf{\Sigma}}=diag(\lambda_1,\cdots,\lambda_m)$, $\alpha,\beta>0$ are two trade-off parameters, and $\mathbf{a}$ is the coefficient vector to be optimized. In \eqref{eq:OurModel2}, the first two terms are directly adapted from \eqref{eq:OurModel1}. The third term allows the coefficient $a_i$ to be large if the corresponding eigenvalue $\lambda_i$ is small, which means that the eigenbasis $\mathbf{U}_i$ with smaller eigenvalues $\lambda_i$ are preferred in the reconstruction of the optimal $\mathbf{f}$, as they are usually smooth and contain rich class information. In contrast, the value of $a_i$ should be suppressed to a small value if the associated eigenvalue $\lambda_i$ is large.\par
Considering that \eqref{eq:OurModel2} involves $\ell_0$ norm that is usually difficult to optimize, we replace the $\ell_0$ norm with the surrogate $\ell_1$ norm, and thus our proposed SIIS model for binary classification is expressed by
\begin{equation}\label{eq:OurModel3}
  \underset{\mathbf{a}=(a_1,\cdots,a_m)^{\top}}{\mathop{\min }}\ \left\|\mathbf{PUa}\right\|_1+\alpha\left\|\mathbf{JUa}-\mathbf{y}\right\|_1+\beta\mathbf{a}^{\top}\mathbf{\Sigma}\mathbf{a},
\end{equation}
in which the $\ell_1$ norm of a vector $\mathbf{h}$ is computed by $\left\|\mathbf{h}\right\|_1=\sum\nolimits_{i}|h_i|$. This model can be easily extended to multi-class cases. Suppose $\mathbf{Y}\in\left\{0,1\right\}^{l\times c}$ ($c$ is the total number of classes) is the label matrix of the initial labeled examples, of which the $i$-th row $\mathbf{Y}_{i,:}$ indicates the label of $\mathbf{x}_i\in\mathcal{L}$. To be specific, $\mathbf{Y}_{ij}=1$ if $\mathbf{x}_i$ belongs to the $j$-th class, and 0 otherwise. As a result, we arrive at the SIIS model for handling multi-class cases, namely:
\begin{equation}\label{eq:OurModel4}
  \underset{\mathbf{A}\in \mathbb{R}^{m\times c}}{\mathop{\min }}\ \left\|\mathbf{PUA}\right\|_{2,1}+\alpha\left\|\mathbf{JUA}-\mathbf{Y}\right\|_{2,1}+\beta \mathrm{tr}\left(\mathbf{A}^{\top}\mathbf{\Sigma}\mathbf{A}\right),
\end{equation}
in which $\left\|\mathbf{H}\right\|_{2,1}$ calculates the $\ell_{2,1}$ norm of the matrix $\mathbf{H}$ by $\left\|\mathbf{H}\right\|_{2,1}=\sum_i\sqrt{\sum_{j}H_{ij}^{2}}$. Therefore, the ``clean'' soft label matrix of all the examples in $\mathcal{L}\cup\mathcal{U}$ can be recovered by $\mathbf{F}=\mathbf{UA}$ in which the $(i,j)$-th element $F_{ij}$ represents the posterior probability of $\mathbf{x}_i$ belonging to the $j$-th class. Consequently, the example $\mathbf{x}_i\in\mathcal{L}\cup\mathcal{U}$ is classified into the $j$-th class if $j\!=\!\arg\max_{j'\in\{1,\cdots,c\}} {F}_{ij'}$.

\section{Optimization}
\label{sec:Optimization}
Without loss of generality, this section introduces the optimization for model \eqref{eq:OurModel4} as it is applicable to multi-class problems. By letting $\mathbf{Q}=\mathbf{PUA}$ and $\mathbf{B}=\mathbf{JUA}-\mathbf{Y}$, \eqref{eq:OurModel4} can be transformed to
\begin{equation}\label{eq:ConstrainedOpt}
\begin{split}
   \underset{\mathbf{A},\mathbf{B},\mathbf{Q}}{\mathop{\min }}\ & \left\|\mathbf{Q}\right\|_{2,1}+\alpha\left\|\mathbf{B}\right\|_{2,1}+\beta \mathrm{tr}\left(\mathbf{A}^{\top}\mathbf{\Sigma}\mathbf{A}\right)\\
   s.t. \ \ & \mathbf{Q}={\mathbf{PUA}},\ \mathbf{B}= \mathbf{JUA}-\mathbf{Y}.\\
\end{split}
\end{equation}
This constrained optimization problem can be easily solved by using the Alternating Direction Method of Multipliers (ADMM), which
alternatively optimizes one variable at one time with the other variables remaining fixed. The augmented Lagrangian function is
\begin{equation}\label{eq:Lagrangian}
\begin{split}
  L(\mathbf{A},\mathbf{B},&\mathbf{Q},\mathbf{\Lambda}_1,\mathbf{\Lambda}_2,\mu) =
   \left\|\mathbf{Q}\right\|_{2,1}\!+\!\alpha\left\|\mathbf{B}\right\|_{2,1}\!+\!\beta \mathrm{tr}\left(\mathbf{A}^{\top}\mathbf{\Sigma}\mathbf{A}\right)\\
   &+\mathrm{tr}\left(\mathbf{\Lambda}_1^{\top}(\mathbf{Q}-\mathbf{PUA})\right)
   \!+\!\mathrm{tr}\left(\mathbf{\Lambda}_2^{\top}(\mathbf{B}\!-\!\mathbf{JUA}\!+\!\mathbf{Y})\right)\\
   &+\frac{\mu}{2}\left(\left\|\mathbf{Q}-\mathbf{PUA}\right\|_{\mathrm{F}}^{2}
   +\left\|\mathbf{B}-\mathbf{JUA}+\mathbf{Y}\right\|_{\mathrm{F}}^{2}\right),\\
\end{split}
\end{equation}
where $\mathbf{\Lambda}_1$ and $\mathbf{\Lambda}_2$ are Lagrangian multipliers, $\mu>0$ is the penalty coefficient, and ``$\left\|\cdot\right\|_{\mathrm{F}}$'' denotes the Frobenius norm of the corresponding matrix. Based on \eqref{eq:Lagrangian}, the variables $\mathbf{A}$, $\mathbf{B}$ and $\mathbf{Q}$ can be sequentially updated via an iterative way.\\
\textbf{Update $\mathbf{Q}$:} The subproblem related to $\mathbf{Q}$ is
\begin{equation}\label{eq:QSubproblem}
\begin{split}
   \underset{\mathbf{Q}}{\mathop{\min }}&\left\|\mathbf{Q}\right\|_{2,1}+\mathrm{tr}\left(\mathbf{\Lambda}_1^{\top}(\mathbf{Q}-\mathbf{PUA})\right)+\frac{\mu}{2}\left\|\mathbf{Q}-\mathbf{PUA}\right\|_{\mathrm{F}}^{2}\\
    \Leftrightarrow & \left\|\mathbf{Q}\right\|_{2,1}+\frac{\mu}{2}\left\|\mathbf{Q}-\mathbf{PUA}+\frac{1}{\mu}\mathbf{\Lambda}_1\right\|_{\mathrm{F}}^2\\
   \Leftrightarrow & \frac{1}{\mu}\left\|\mathbf{Q}\right\|_{2,1}+\frac{1}{2}\left\|\mathbf{Q}-\mathbf{N}\right\|_{\mathrm{F}}^2,\\
\end{split}
\end{equation}
where $\mathbf{N}=\mathbf{PUA}-\frac{1}{\mu}\mathbf{\Lambda}_1$.\par

According to \cite{liu2013robust}, the solution of \eqref{eq:QSubproblem} can be expressed as
\begin{equation}\label{eq:Q_solution}
  \mathbf{Q}_{i,:}=\left\{ \begin{split}
  & \frac{{{\left\| \mathbf{N}_{i,:} \right\|}_{2}}-{1}/{\mu}}{{{\left\| \mathbf{N}_{i,:} \right\|}_{2}}}\mathbf{N}_{i,:},\ \ \ \ {1}/{\mu}<{{\left\| \mathbf{N}_{i,:} \right\|}_{2}} \\
 & \ \ \ \ \ \ \ \ \ \ \  0,\qquad \qquad  \quad \quad \text{otherwise} \\
\end{split} \right.,
\end{equation}
where $\mathbf{N}_{i,:}$ represents the $i$-th row of matrix $\mathbf{N}$.\\
\textbf{Update $\mathbf{B}$:} By dropping the unrelated terms to $\mathbf{B}$ in \eqref{eq:Lagrangian}, the subproblem regarding $\mathbf{B}$ is
\begin{equation}\label{eq:BSubproblem}
   \underset{\mathbf{B}}{\mathop{\min }} \ \alpha\!\left\|\mathbf{B}\right\|_{2,1}+\mathrm{tr}\!\left(\mathbf{\Lambda}_2^{\top}(\mathbf{B}\!-\!\mathbf{JUA}\!+\!\mathbf{Y})\right)
   +\frac{\mu}{2}\!\left\|\mathbf{B}\!-\!\mathbf{JUA}\!+\!\mathbf{Y}\right\|_{\mathrm{F}}^{2},
\end{equation}
which is equivalent to
\begin{equation}\label{eq:BSubproblem2}
   \underset{\mathbf{B}}{\mathop{\min }} \ \frac{\alpha}{\mu}\left\|\mathbf{B}\right\|_{2,1}+\frac{1}{2}\left\|\mathbf{B}-\mathbf{M}\right\|_{\mathrm{F}}^{2},
\end{equation}
where $\mathbf{M}=\mathbf{JUA}-\mathbf{Y}-\frac{1}{\mu}\mathbf{\Lambda}_2$. Similar to \eqref{eq:Q_solution}, the optimizer of \eqref{eq:BSubproblem2} is
\begin{equation}\label{eq:B_solution}
  \mathbf{B}_{i,:}=\left\{ \begin{split}
  & \frac{{{\left\| \mathbf{M}_{i,:} \right\|}_{2}}-{\alpha}/{\mu}}{{{\left\| \mathbf{M}_{i,:} \right\|}_{2}}}\mathbf{M}_{i,:},\ \ \ \ {\alpha}/{\mu}<{{\left\| \mathbf{M}_{i,:} \right\|}_{2}} \\
 & \ \ \ \ \ \ \ \ \ \ \  0,\qquad \qquad  \quad \quad \text{otherwise} \\
\end{split} \right..
\end{equation}\\
\textbf{Update $\mathbf{A}$:} The subproblem regarding $\mathbf{A}$ is
\begin{equation}\label{eq:ASubproblem}
\begin{split}
  \underset{\mathbf{A}}{\mathop{\min }} \
   &\beta \mathrm{tr}\left(\mathbf{A}^{\top}\mathbf{\Sigma}\mathbf{A}\right)-\mathrm{tr}\left(\mathbf{\Lambda}_1^{\top}\mathbf{PUA}\right)-\mathrm{tr}\left(\mathbf{\Lambda}_2^{\top}\mathbf{JUA}\right)\\   &+\frac{\mu}{2}\left(\left\|\mathbf{Q}-\mathbf{PUA}\right\|_{\mathrm{F}}^{2}
   +\left\|\mathbf{B}-\mathbf{JUA}+\mathbf{Y}\right\|_{\mathrm{F}}^{2}\right).\\
\end{split}
\end{equation}
By computing the derivative of \eqref{eq:ASubproblem} w.r.t. $\mathbf{A}$, and then setting the result to zero, we have
\begin{equation}\label{eq:A_solution}
\begin{split}
  \mathbf{A}&=\left(2\beta\mathbf{\Sigma}+\mu\mathbf{U}^{\top}\mathbf{P}^{\top}\mathbf{P}\mathbf{U}+\mu\mathbf{U}^{\top}\mathbf{J}^{\top}\mathbf{J}\mathbf{U}\right)^{-1}\\
  &\left[\mathbf{U}^{\top}\mathbf{P}^{\top}\mathbf{\Lambda}_1\!+\!\mathbf{U}^{\top}\mathbf{J}^{\top}\mathbf{\Lambda}_2
  \!+\!\mu\mathbf{U}^{\top}\mathbf{P}^{\top}\mathbf{Q}\!+\!\mu\mathbf{U}^{\top}\mathbf{J}^{\top}(\mathbf{B}\!+\!\mathbf{Y})\right]\!.\\
\end{split}
\end{equation}\par

The entire proposed SIIS algorithm is summarized in Algorithm~\ref{alg:1}. The related theoretical analyses will be introduced in the next section.

\begin{algorithm}[t]
\small
   \caption{Summary of the proposed algorithm.}
   \label{alg:1}
\begin{algorithmic}[1]
   \STATE {\bfseries Input:} $\alpha$, $\beta$, $m$, $K$, $\mathbf{Y}$.
   \STATE Construct $K$NN graph and compute the adjacency matrix $\mathbf{W}$;

   \STATE Compute the $m$ smallest eigenvalues and eigenvectors of Laplacian matrix $\mathbf{L}$, and store them in $\mathbf{\Sigma}$ and $\mathbf{U}$, respectively;

   \STATE // Begin classification
   \STATE Set $\mathbf{\Lambda}_1, \mathbf{\Lambda}_2$ to all-one matrices; Initialize $\mathbf{A}=\mathbf{O}$; Set $\mu\!=\!1$, $\mu_{max}\!=\!10^{10}$, $\rho\!=\!1.2$, $\epsilon=10^{-4}$, $MaxIter=100$;
   \STATE set $iter\!=\!0$;
   \REPEAT
   \STATE Update $\mathbf{Q}$ via \eqref{eq:Q_solution};
   \STATE Update $\mathbf{B}$ via \eqref{eq:B_solution};
   \STATE Update $\mathbf{A}$ via \eqref{eq:A_solution};
   \STATE // Update Lagrangian multipliers
   \STATE $\mathbf{\Lambda}_1:=\mathbf{\Lambda}_1+\mu(\mathbf{Q}-\mathbf{PUA})$,
   $\mathbf{\Lambda}_2:=\mathbf{\Lambda}_2+\mu(\mathbf{B}-\mathbf{JUA}+\mathbf{Y})$;

   \STATE // Update penalty coefficients
   \STATE $\mu:=\min(\rho\mu, \mu_{max})$;
   \STATE $iter:=iter+1$;
   \UNTIL{$\frac{\left\|\mathbf{A}^{(iter)}-\mathbf{A}^{(iter-1)}\right\|_{\infty}}{\left\|\mathbf{A}^{(iter-1)}\right\|_{\infty}}\leq\epsilon$ or $iter=MaxIter$}
   \STATE Recover $\mathbf{F}=\mathbf{UA}$;
   \STATE Classify $\mathbf{x}_i\in\mathcal{L}\cup\mathcal{U}$ to the $j$-th class via $j\!=\!\arg\max_{j'\in\{1,\cdots,c\}} {F}_{ij'}$;
   \STATE {\bfseries Output:} Class labels $\{y_i\}_{i=1}^{n}$.
\end{algorithmic}
\end{algorithm}

\section{Theoretical Analyses}
\label{sec:TheoreticalAnalyses}
In this section, we firstly prove that the optimization process explained in Section~\ref{sec:Optimization} will converge to a stationary point, and then analyse the computational complexity of the proposed SIIS method.

\subsection{Proof of Convergence}
\label{sec:Convergence}

Up to now, some prior works such as \cite{eckstein1992douglas,Liu2014Enhancing} have provided the sufficient conditions for the convergence of a general ADMM solver, which is

\begin{theorem} \label{thm:GeneralConvergence}
\textbf{\cite{eckstein1992douglas,Liu2014Enhancing}}
Given the optimization problem with linear constraints as:
\begin{equation}\label{eq:olc}
   {\min}_{\{X,V\}} f(X)+g(V), \ \ \ s.t. \ \ \mathbf{G}X=V,
\end{equation}
where $\mathbf{G}$ is the coefficient matrix, and $f(X), g(V)$ are two functions w.r.t. the optimizers $X$ and $V$, respectively. The augmented Lagrangian function of \eqref{eq:olc} is
\begin{equation}\label{eq:alg}
\begin{split}
  L&=f(X)+g(V)+\langle Z,\mathbf{G}X-V\rangle+\frac{\mu}{2}\left\|\mathbf{G}X-V\right\|^2_\mathrm{F}\\
   &= f(X)+g(V)+\frac{\mu}{2}\left\|\mathbf{G}X-V+C\right\|^2_\mathrm{F}+\textmd{constant},\\
\end{split}
\end{equation}
where $Z$ is the Lagrange multiplier, $\mu>0$ is the penalty coefficient, and $C=\frac{Z}{\mu}$. Then, if $f(X)$ and $g(V)$ are convex and $\mathbf{G}$ has full column rank, there exists a solution of problem \eqref{eq:alg} such that the sequences $\{X_t, V_t, C_t\}$ generated by the ADMM algorithm will converge given the initial values $X_0$, $V_0$ and $C_0=\frac{Z_0}{\mu_0}$. Otherwise, at least one of the sequences $\{X_t, V_t\}$ and $\{C_t\}$ will diverge.
\end{theorem}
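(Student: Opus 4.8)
The plan is to follow the classical operator-splitting route of Eckstein and Bertsekas, showing that the ADMM recursion coincides with the Douglas--Rachford splitting method applied to the dual of \eqref{eq:olc}, and then invoking fixed-point theory for averaged nonexpansive operators. First I would pass to the dual: since $f$ and $g$ are convex, the saddle points of the augmented Lagrangian \eqref{eq:alg} coincide with the KKT points of \eqref{eq:olc}, characterized by $0\in\partial f(X)-\mathbf{G}^{\top}Z$, $0\in\partial g(V)+Z$, and $\mathbf{G}X=V$. Introducing the maximal monotone operators $A=\partial(f^{*}\circ(-\mathbf{G}^{\top}))$ and $B=\partial g^{*}$ (suitable subdifferentials of the conjugates, pulled back through $\mathbf{G}$), the dual optimality condition becomes $0\in A(Z)+B(Z)$, i.e. the task of finding a zero of the sum $A+B$.

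The second step is to verify algebraically that one full ADMM sweep---the successive minimizations of \eqref{eq:alg} over $X$ and over $V$, followed by the multiplier update $Z_{t+1}=Z_t+\mu(\mathbf{G}X_{t+1}-V_{t+1})$---is exactly one application of the Douglas--Rachford operator $T=\tfrac12\bigl(I+(2J_{\mu B}-I)(2J_{\mu A}-I)\bigr)$, where $J_{\mu A}=(I+\mu A)^{-1}$ is the resolvent. This identification is the technical core: the two argmin subproblems are well posed because the quadratic penalty makes each objective strongly convex in its block, and writing out their first-order optimality conditions reproduces precisely the resolvent equations for $J_{\mu A}$ and $J_{\mu B}$. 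I would carry out this matching by tracking the intermediate and scaled-multiplier variables and confirming they cancel to leave a single reflected-resolvent composition.

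The third step is convergence itself. The operator $T$ is firmly nonexpansive (equivalently $\tfrac12$-averaged), being built from reflected resolvents of maximal monotone operators, so the iteration $C_{t+1}=T(C_t)$ converges to a fixed point of $T$ \emph{whenever one exists}, by the standard fixed-point results for averaged operators (the Krasnoselskii--Mann iteration). A fixed point of $T$ corresponds to a zero of $A+B$, hence to a dual optimal $Z^{\star}$ and, via the stationarity conditions, to a primal KKT solution. The full-column-rank hypothesis on $\mathbf{G}$ enters exactly here: it guarantees that $\mathbf{G}X^{\star}=V^{\star}$ determines $X^{\star}$ uniquely, so convergence of $\{V_t\}$ and $\{C_t\}$ forces convergence of $\{X_t\}$ (without it only $\{\mathbf{G}X_t\}$ need settle down). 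The dichotomy follows from the same theory: if $A+B$ has no zero---no KKT solution, e.g. an infeasible or unbounded problem---then $T$ is fixed-point free and the averaged iteration escapes every bounded set, so at least one of $\{X_t,V_t\}$ and $\{C_t\}$ must diverge.

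I expect the main obstacle to be the second step, the exact identification of an ADMM sweep with a single Douglas--Rachford step, since it demands careful bookkeeping of how the scaled multiplier $C=Z/\mu$ transforms under the reflected resolvents; a misplaced factor of $\mu$ or sign there would invalidate the whole correspondence. A fully self-contained alternative that avoids conjugate duality is a direct Lyapunov argument: set $\Phi_t=\tfrac1\mu\|Z_t-Z^{\star}\|_{\mathrm F}^2+\mu\|V_t-V^{\star}\|_{\mathrm F}^2$, combine the variational inequalities from the two subproblems with the convexity of $f$ and $g$ to derive $\Phi_{t+1}\le\Phi_t-\mu\|\mathbf{G}X_{t+1}-V_{t+1}\|_{\mathrm F}^2-\mu\|V_{t+1}-V_t\|_{\mathrm F}^2$, and sum over $t$ to force both residuals to vanish while the bounded sequence converges. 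There the delicate point is assembling the cross terms into that clean telescoping bound, which is again where most of the labor concentrates.
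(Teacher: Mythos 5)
This theorem is not proved in the paper at all: it is quoted verbatim as a known result from \cite{eckstein1992douglas,Liu2014Enhancing}, so there is no internal proof to compare yours against. Your sketch essentially reconstructs the proof in the cited source --- the Eckstein--Bertsekas identification of an ADMM sweep with one Douglas--Rachford step on the dual, followed by averaged-operator (Krasnoselskii--Mann) fixed-point theory, with the dichotomy coming from the fact that in finite dimensions a fixed-point-free averaged map has unbounded orbits --- and it is sound, with the one small correction that the full-column-rank hypothesis on $\mathbf{G}$ is needed already in your second step (to make the $X$-subproblem strongly convex and well posed), not only in the third step where you use it to recover convergence of $\{X_t\}$ from that of $\{\mathbf{G}X_t\}$.
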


Before verifying the convergence of our SIIS algorithm, we first provide some useful lemmas.

\begin{lemma}\label{lemma:HHT}
\textbf{\cite{petersen2008matrix}} For any real matrix $\mathbf{H}$, it holds that $r(\mathbf{H}^{\top}\mathbf{H})=r(\mathbf{H}\mathbf{H}^{\top})=r(\mathbf{H})$
\end{lemma}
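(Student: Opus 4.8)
The plan is to prove the two rank equalities by reducing everything to a single structural fact: a real matrix and its Gram matrix share the same null space. From there the ranks match via the rank--nullity theorem, and the second equality follows by symmetry together with the classical identity $r(\mathbf{H}^{\top})=r(\mathbf{H})$ (row rank equals column rank).

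First I would establish the null space identity $\mathcal{N}(\mathbf{H})=\mathcal{N}(\mathbf{H}^{\top}\mathbf{H})$, where $\mathcal{N}(\cdot)$ denotes the null space. The inclusion $\mathcal{N}(\mathbf{H})\subseteq\mathcal{N}(\mathbf{H}^{\top}\mathbf{H})$ is immediate, since $\mathbf{H}\mathbf{x}=\mathbf{0}$ forces $\mathbf{H}^{\top}\mathbf{H}\mathbf{x}=\mathbf{0}$. For the reverse inclusion, I would take any $\mathbf{x}$ with $\mathbf{H}^{\top}\mathbf{H}\mathbf{x}=\mathbf{0}$ and left-multiply by $\mathbf{x}^{\top}$, obtaining $\mathbf{x}^{\top}\mathbf{H}^{\top}\mathbf{H}\mathbf{x}=\left\|\mathbf{H}\mathbf{x}\right\|_{2}^{2}=0$, which forces $\mathbf{H}\mathbf{x}=\mathbf{0}$. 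This quadratic-form step is the only place where the positive semi-definiteness of $\mathbf{H}^{\top}\mathbf{H}$ (equivalently, the fact that the form equals a squared Euclidean norm over the reals) is used, so it is the crux of the argument, although it is very short.

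Since $\mathbf{H}$ and $\mathbf{H}^{\top}\mathbf{H}$ have the same number of columns and, by the above, identical null spaces, the rank--nullity theorem yields $r(\mathbf{H})=r(\mathbf{H}^{\top}\mathbf{H})$. To obtain the remaining equality, I would apply the identity just derived with $\mathbf{H}^{\top}$ in place of $\mathbf{H}$, giving $r(\mathbf{H}^{\top})=r\big((\mathbf{H}^{\top})^{\top}\mathbf{H}^{\top}\big)=r(\mathbf{H}\mathbf{H}^{\top})$, and then invoke $r(\mathbf{H}^{\top})=r(\mathbf{H})$ to conclude $r(\mathbf{H}\mathbf{H}^{\top})=r(\mathbf{H})$. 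Chaining the two results gives $r(\mathbf{H}^{\top}\mathbf{H})=r(\mathbf{H}\mathbf{H}^{\top})=r(\mathbf{H})$, as claimed.

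I do not expect a genuine obstacle here; the result is standard and the proof is short. The only point requiring care is the reverse null space inclusion, which relies on working over $\mathbb{R}$ so that $\mathbf{x}^{\top}\mathbf{H}^{\top}\mathbf{H}\mathbf{x}=\left\|\mathbf{H}\mathbf{x}\right\|_{2}^{2}$ vanishes if and only if $\mathbf{H}\mathbf{x}=\mathbf{0}$; this is exactly the hypothesis that $\mathbf{H}$ is a real matrix.
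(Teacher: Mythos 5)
Your proof is correct, but there is nothing in the paper to compare it against: the paper states this lemma with a citation to \cite{petersen2008matrix} and gives no proof of its own, treating it as a known fact from the literature. Your argument is the standard textbook derivation — establish $\mathcal{N}(\mathbf{H})=\mathcal{N}(\mathbf{H}^{\top}\mathbf{H})$, where the reverse inclusion follows from $\mathbf{x}^{\top}\mathbf{H}^{\top}\mathbf{H}\mathbf{x}=\left\|\mathbf{H}\mathbf{x}\right\|_{2}^{2}=0\Rightarrow\mathbf{H}\mathbf{x}=\mathbf{0}$, apply rank--nullity (both matrices act on the same number of columns), and then substitute $\mathbf{H}^{\top}$ for $\mathbf{H}$ together with $r(\mathbf{H}^{\top})=r(\mathbf{H})$ to obtain the second equality. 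You also correctly isolate the one delicate point, namely that the quadratic-form step requires $\mathbf{H}$ to be real; indeed the statement fails over $\mathbb{C}$ with the plain transpose, as $\mathbf{H}=\left(\begin{smallmatrix}1\\ i\end{smallmatrix}\right)$ gives $\mathbf{H}^{\top}\mathbf{H}=0$ while $r(\mathbf{H})=1$, so the complex analogue needs the conjugate transpose. In short, your proposal supplies a sound, self-contained proof of a fact the paper merely imports by reference.
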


\begin{lemma}\label{lemma:MultiplicationRank}
Given three matrices $\mathbf{H}_{m\times n}$, $\mathbf{R}_{n\times q}$ and $\mathbf{T}_{m\times q}$ that satisfy $\mathbf{HR}=\mathbf{T}$, and $r(\mathbf{H})=n$ with ``$r(\cdot)$'' denoting the rank of the corresponding matrix, then we have $r(\mathbf{R})=r(\mathbf{T})$.
\end{lemma}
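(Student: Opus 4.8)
The plan is to exploit the hypothesis $r(\mathbf{H})=n$, i.e.\ that $\mathbf{H}$ has full column rank, to construct a left inverse of $\mathbf{H}$ and then sandwich $r(\mathbf{R})$ between $r(\mathbf{T})$ and itself. The argument naturally splits into two rank inequalities whose combination yields the claimed equality.

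First I would dispatch the easy direction. Since $\mathbf{T}=\mathbf{HR}$, the standard sub-multiplicativity of matrix rank, namely $r(\mathbf{HR})\le\min\{r(\mathbf{H}),r(\mathbf{R})\}$, gives immediately $r(\mathbf{T})\le r(\mathbf{R})$. This direction uses nothing about the column rank of $\mathbf{H}$ and holds for arbitrary conformable matrices.

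The substantive direction is $r(\mathbf{R})\le r(\mathbf{T})$, and this is where the hypothesis $r(\mathbf{H})=n$ enters. By Lemma~\ref{lemma:HHT}, $r(\mathbf{H}^{\top}\mathbf{H})=r(\mathbf{H})=n$, so the $n\times n$ matrix $\mathbf{H}^{\top}\mathbf{H}$ has full rank and is therefore invertible. Setting $\mathbf{H}^{+}=(\mathbf{H}^{\top}\mathbf{H})^{-1}\mathbf{H}^{\top}$ produces a left inverse, $\mathbf{H}^{+}\mathbf{H}=\mathbf{I}_n$. Multiplying the identity $\mathbf{T}=\mathbf{HR}$ on the left by $\mathbf{H}^{+}$ then recovers $\mathbf{R}=\mathbf{H}^{+}\mathbf{H}\mathbf{R}=\mathbf{H}^{+}\mathbf{T}$, whence $r(\mathbf{R})=r(\mathbf{H}^{+}\mathbf{T})\le r(\mathbf{T})$ by sub-multiplicativity once more. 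Combining the two inequalities gives $r(\mathbf{R})=r(\mathbf{T})$.

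The only genuine obstacle is justifying the invertibility of $\mathbf{H}^{\top}\mathbf{H}$, and Lemma~\ref{lemma:HHT} disposes of it cleanly; everything else is the elementary rank inequality applied twice. As an alternative to the left-inverse construction, I could argue directly that full column rank makes the linear map $\mathbf{x}\mapsto\mathbf{H}\mathbf{x}$ injective, so that a subset of the columns of $\mathbf{R}$ is linearly independent if and only if the corresponding columns of $\mathbf{T}=\mathbf{HR}$ are, forcing the two column ranks to coincide; this route avoids inverting $\mathbf{H}^{\top}\mathbf{H}$ but rests on exactly the same full-column-rank hypothesis. I would present the left-inverse version as the main proof since it reuses Lemma~\ref{lemma:HHT} and matches the algebraic style of the surrounding analysis.
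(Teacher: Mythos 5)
Your proof is correct, but it follows a genuinely different route from the paper's. The paper never splits the claim into two inequalities: it takes an invertible matrix $\mathbf{P}_e$ with $\mathbf{P}_e\mathbf{H}=\left(\begin{smallmatrix}\mathbf{I}_n\\ \mathbf{O}\end{smallmatrix}\right)$ (the row-reduced form guaranteed by $r(\mathbf{H})=n$), observes that $\mathbf{P}_e\mathbf{T}=\mathbf{P}_e\mathbf{H}\mathbf{R}=\left(\begin{smallmatrix}\mathbf{R}\\ \mathbf{O}\end{smallmatrix}\right)$, and reads off $r(\mathbf{T})=r(\mathbf{P}_e\mathbf{T})=r(\mathbf{R})$ in a single chain, using only the invariance of rank under left multiplication by an invertible matrix. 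You instead sandwich the equality: the easy bound $r(\mathbf{T})\le r(\mathbf{R})$ from sub-multiplicativity of rank, and the reverse bound from the explicit left inverse $\mathbf{H}^{+}=(\mathbf{H}^{\top}\mathbf{H})^{-1}\mathbf{H}^{\top}$, whose existence you justify via Lemma~\ref{lemma:HHT}. Both arguments exploit exactly the full-column-rank hypothesis, but they cash it in differently. Your version has the merit of reusing Lemma~\ref{lemma:HHT} (which the paper itself only invokes later, in the proof of Theorem~\ref{thm:ConvergeOfOurMethod}) and of being the familiar least-squares/pseudo-inverse construction. The paper's version is marginally more elementary and more general: it never forms the Gram matrix $\mathbf{H}^{\top}\mathbf{H}$, so it works verbatim over any field, whereas invertibility of $\mathbf{H}^{\top}\mathbf{H}$ for full-column-rank $\mathbf{H}$ genuinely uses that the entries are real (it fails, for instance, for $\mathbf{H}=\left(\begin{smallmatrix}1\\ 1\end{smallmatrix}\right)$ over the field with two elements). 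For the matrices in this paper, which are all real, the two proofs are interchangeable.
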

\begin{proof}
Since $r(\mathbf{H})=n$, we have an invertible matrix $\mathbf{P}_e$ such that $\mathbf{P}_{e}\mathbf{H}=\left(\begin{smallmatrix}
{\mathbf{I}_{n}} \\ \mathbf{O}  \\ \end{smallmatrix} \right)$ where $\mathbf{I}_n$ is the $n\times n$ identity matrix. Therefore, we have
\begin{equation}\label{eq:ProveLemma1}
\left( \begin{matrix}
   \mathbf{R}  \\
   \mathbf{O}  \\
\end{matrix} \right)=\left( \begin{matrix}
   {\mathbf{I}_{n}}  \\
   \mathbf{O}  \\
\end{matrix} \right)\mathbf{R}=\mathbf{P}_{e}\mathbf{HR}=\mathbf{P}_{e}\mathbf{T}.
\end{equation}
Since $\mathbf{P}_e$ is invertible, we know that $r(\mathbf{P}_{e}\mathbf{T})=r(\mathbf{T})$. By further noticing that $r\left( \begin{smallmatrix}    \mathbf{R}  \\    \mathbf{O}  \\ \end{smallmatrix} \right)=r(\mathbf{R})$, we arrive at $r(\mathbf{R})=r(\mathbf{T})$.
\end{proof}

Now we formally present the theorem that guarantees the convergence of our SIIS algorithm:
\begin{theorem}\label{thm:ConvergeOfOurMethod}
Given the optimization problem \eqref{eq:ConstrainedOpt}, the iterative ADMM process will converge to a stationary point.
\end{theorem}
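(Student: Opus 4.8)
The plan is to reduce problem \eqref{eq:ConstrainedOpt} to the template of Theorem~\ref{thm:GeneralConvergence} and then discharge its two hypotheses. First I would collapse the three blocks $\mathbf{A}$, $\mathbf{B}$, $\mathbf{Q}$ into the two groups that theorem requires: take $X=\mathbf{A}$ and stack the two nonsmooth variables into $V=\left[\mathbf{Q}^{\top}\ \mathbf{B}^{\top}\right]^{\top}$. With $f(X)=\beta\,\mathrm{tr}(\mathbf{A}^{\top}\mathbf{\Sigma}\mathbf{A})$ and $g(V)=\|\mathbf{Q}\|_{2,1}+\alpha\|\mathbf{B}\|_{2,1}$, the two linear constraints $\mathbf{Q}=\mathbf{PUA}$ and $\mathbf{B}=\mathbf{JUA}-\mathbf{Y}$ combine into the single relation $\mathbf{G}\mathbf{A}=V+\text{const}$ with coefficient matrix $\mathbf{G}=\left[(\mathbf{PU})^{\top}\ (\mathbf{JU})^{\top}\right]^{\top}$; the constant offset $\mathbf{Y}$ is harmless, since it can be absorbed by a shift of $V$ (equivalently into the multiplier term in \eqref{eq:alg}) and affects neither the convexity of $g$ nor the rank of $\mathbf{G}$. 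I would also note that because $g$ and the quadratic penalty are block-separable in $(\mathbf{Q},\mathbf{B})$, the joint $V$-update decouples into exactly the two independent updates \eqref{eq:Q_solution} and \eqref{eq:B_solution} used in Algorithm~\ref{alg:1}, so the three-block iteration is genuinely the two-block ADMM to which Theorem~\ref{thm:GeneralConvergence} applies.

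Next I would verify the two hypotheses. Convexity is routine: $f$ is a quadratic form in $\mathbf{A}$ whose matrix $\mathbf{\Sigma}=\mathrm{diag}(\lambda_1,\dots,\lambda_m)$ is positive semidefinite (the $\lambda_i$ are Laplacian eigenvalues, hence nonnegative), and $g$ is a sum of norms composed with affine maps, hence convex. The substantive step — and the \emph{main obstacle} — is to prove that $\mathbf{G}$ has full column rank $m$.

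For that I would factor $\mathbf{G}=\mathbf{E}\mathbf{U}$, where $\mathbf{E}=\left[\mathbf{P}^{\top}\ \mathbf{J}^{\top}\right]^{\top}$ stacks $\mathbf{P}$ over $\mathbf{J}$, and argue in two stages. First, $\mathbf{E}$ itself has full column rank $n$: if $\mathbf{E}\mathbf{z}=\mathbf{0}$ for $\mathbf{z}=(z_1,\dots,z_n)^{\top}$, then $\mathbf{P}\mathbf{z}=\mathbf{0}$ and $\mathbf{J}\mathbf{z}=\mathbf{0}$; by \eqref{eq:DefinitonOfP} the $k$-th entry of $\mathbf{P}\mathbf{z}$ equals $W_{ij}(z_i-z_j)$ for the edge $(i,j)$, so $\mathbf{P}\mathbf{z}=\mathbf{0}$ forces $z_i=z_j$ across every edge and hence $\mathbf{z}$ to be constant on each connected component of $\mathcal{G}$, while $\mathbf{J}\mathbf{z}=\mathbf{0}$ pins the first $l$ coordinates to zero; assuming $\mathcal{G}$ is connected (or, more generally, that each component carries at least one labeled vertex) this yields $\mathbf{z}=\mathbf{0}$, i.e. $r(\mathbf{E})=n$. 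Second, I would invoke Lemma~\ref{lemma:MultiplicationRank} with $\mathbf{H}=\mathbf{E}$ (full column rank $n$), $\mathbf{R}=\mathbf{U}$ and $\mathbf{T}=\mathbf{E}\mathbf{U}=\mathbf{G}$ to transfer rank, giving $r(\mathbf{G})=r(\mathbf{U})$; since $\mathbf{U}$ has orthonormal columns, $\mathbf{U}^{\top}\mathbf{U}=\mathbf{I}_m$, so Lemma~\ref{lemma:HHT} gives $r(\mathbf{U})=r(\mathbf{U}^{\top}\mathbf{U})=m$. Hence $r(\mathbf{G})=m$.

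With both hypotheses established, Theorem~\ref{thm:GeneralConvergence} applies directly and yields convergence of the ADMM sequence to a stationary point of \eqref{eq:ConstrainedOpt}, completing the argument. I expect the only delicate points to be (i) justifying the block grouping so that Theorem~\ref{thm:GeneralConvergence} is literally applicable to the three-variable iteration, and (ii) the connectivity assumption underlying $r(\mathbf{E})=n$ — without a labeled vertex in each component, $\mathbf{G}$ could drop rank and the full-column-rank hypothesis would fail.
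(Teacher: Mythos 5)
Your proposal is correct, and its top-level structure matches the paper's proof: the same identification $X=\mathbf{A}$, $V$ stacking the two nonsmooth blocks, $f(X)=\beta\,\mathrm{tr}(\mathbf{A}^{\top}\mathbf{\Sigma}\mathbf{A})$, $g(V)=\|\mathbf{Q}\|_{2,1}+\alpha\|\mathbf{B}\|_{2,1}$, and $\mathbf{G}=\bigl(\begin{smallmatrix}\mathbf{PU}\\ \mathbf{JU}\end{smallmatrix}\bigr)$, followed by the factorization $\mathbf{G}=\mathbf{E}\mathbf{U}$ (the paper writes $\mathbf{\Omega}$ for your $\mathbf{E}$), Lemma~\ref{lemma:MultiplicationRank} to transfer rank, and orthogonality of the columns of $\mathbf{U}$ to conclude $r(\mathbf{G})=m$. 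Your device of absorbing the constant $\mathbf{Y}$ by shifting $V$ is exactly what the paper does by setting $V=\bigl(\begin{smallmatrix}\mathbf{Q}\\ \mathbf{B}+\mathbf{Y}\end{smallmatrix}\bigr)$.

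Where you genuinely diverge is the key sub-argument that $\mathbf{E}=\bigl(\begin{smallmatrix}\mathbf{P}\\ \mathbf{J}\end{smallmatrix}\bigr)$ has full column rank $n$. The paper proceeds through sign matrices: $r(\mathbf{P})=r(sgn(\mathbf{P}))$, the identity $sgn(\mathbf{P})^{\top}sgn(\mathbf{P})=sgn(\mathbf{L})$, the cited fact $r(sgn(\mathbf{L}))=n-1$ for a connected graph together with Lemma~\ref{lemma:HHT}, and finally the observation that the rows of $\mathbf{J}$ lie outside the row space of $\mathbf{P}$ (that row space being orthogonal to the all-ones vector), which lifts the rank from $n-1$ to $n$. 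You instead give a direct null-space argument: $\mathbf{P}\mathbf{z}=\mathbf{0}$ forces $z_i=z_j$ across every edge because $W_{ij}>0$, hence $\mathbf{z}$ is constant on each connected component, and $\mathbf{J}\mathbf{z}=\mathbf{0}$ then kills those constants. Your route is more elementary (it needs no facts about sign matrices or the rank of a graph Laplacian) and strictly more general: it isolates the exact hypothesis required, namely that every connected component of $\mathcal{G}$ contains at least one labeled vertex, whereas the paper's argument is tied to connectedness of the whole graph and does not flag this assumption. Two further points in your favor: your use of Lemma~\ref{lemma:HHT} via $\mathbf{U}^{\top}\mathbf{U}=\mathbf{I}_m$ makes the step $r(\mathbf{U})=m$ explicit where the paper merely asserts it from orthogonality, and your remark that block-separability of $g$ and of the quadratic penalty collapses the $(\mathbf{Q},\mathbf{B})$ updates into a single $V$-update patches a gap the paper leaves implicit — without it, one cannot literally invoke the two-block convergence guarantee of Theorem~\ref{thm:GeneralConvergence} for what is written in Algorithm~\ref{alg:1} as a three-variable iteration.
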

\begin{proof}
To facilitate the proof, we first rewrite the optimization problem \eqref{eq:ConstrainedOpt} as the formation of \eqref{eq:olc}. It can be easily verified that if we set $X=\textbf{A}$, $V = \left( \begin{smallmatrix}   \mathbf{Q}  \\ \mathbf{B}+\mathbf{Y}  \\ \end{smallmatrix} \right)$, and $\mathbf{G} = \left( \begin{smallmatrix}   \mathbf{PU}  \\ \mathbf{JU}  \\ \end{smallmatrix} \right)$, the $f(X)$ and $g(V)$ in \eqref{eq:olc} can be expressed by $f(X)=\beta \mathrm{tr}\left(\mathbf{A}^{\top}\mathbf{\Sigma}\mathbf{A}\right)$ and $g(V)=\left\|\mathbf{Q}\right\|_{2,1}+\alpha\left\|\mathbf{B}\right\|_{2,1}$, respectively. Therefore, both $f(X)$ and $g(V)$ in our case are convex. Next, according to Theorem~\ref{thm:GeneralConvergence}, we have to verify that the matrix $\mathbf{G}$ has full column rank.\par

By formulating $\mathbf{G} = \left( \begin{smallmatrix}   \mathbf{PU}  \\ \mathbf{JU}  \\ \end{smallmatrix} \right)=\left( \begin{smallmatrix}   \mathbf{P}  \\   \mathbf{J}  \\ \end{smallmatrix} \right)\mathbf{U}=\mathbf{\Omega}\mathbf{U}$ where $\mathbf{\Omega}=\left( \begin{smallmatrix}   \mathbf{P}  \\   \mathbf{J}  \\ \end{smallmatrix} \right)$, we investigate the rank of $\mathbf{\Omega}$. It is straightforward that $\mathbf{P}$ has the identical rank with the matrix $\tilde{\mathbf{P}}=sgn(\mathbf{P})$, in which $sgn(\mathbf{P})$ returns $\tilde{{P}}_{ij}=1,0,-1$ if the corresponding $P_{ij}$ is positive, 0, and negative, respectively. Furthermore, by noticing that $\tilde{\mathbf{P}}^{\top}\tilde{\mathbf{P}}=sgn({\mathbf{L}})$ \cite{wang2016trend} with $\mathbf{L}$ being the graph Laplacian matrix, and $r(sgn({\mathbf{L}}))=n-1$ given that the graph $\mathcal{G}$ is connected \cite{Goldberg09introductionto}, we know that $r(\mathbf{P})=r(\tilde{\mathbf{P}})=r(sgn({\mathbf{L}}))=n-1$ according to Lemma~\ref{lemma:HHT}. Since $\mathbf{P}$ is a submatrix of $\mathbf{\Omega}$, we obtain $r(\mathbf{\Omega})\geq n-1$. On the other hand, since $\mathbf{J}$ has the form $\mathbf{J}=\left(\begin{matrix}\mathbf{I}_l & \mathbf{O} \\ \end{matrix}\right)$, of which the rows are not in the row space of $\mathbf{P}$, we know that $\mathbf{\Omega}$ has full column rank $n$. Consequently, according to Lemma~\ref{lemma:MultiplicationRank} we arrive at $r(\mathbf{G})=r(\mathbf{U})$. Since the columns of $\mathbf{U}$ correspond to the eigenvectors of $\mathbf{L}$ that are orthogonal, we know $r(\mathbf{G})=r(\mathbf{U})=m$, which means that $\mathbf{G}\in\mathbb{R}^{(|\mathcal{E}|+l)\!\times\! m}$ has full column rank, and thus our method is guaranteed to converge.

%

\end{proof}
\vspace{-18pt}

\subsection{Computational Complexity}
\label{sec:Complexity}

This section studies the computational complexity of SIIS. The graph construction in Line~2 of Algorithm~\ref{alg:1} takes $\mathcal{O}(n^2)$ complexity. The Line~3 is accomplished by using the Implicit Restarted Lanczos Method \cite{calvetti1994implicitly}, of which the complexity is $\mathcal{O}\left((n_0+n)t_1\right)$, where $n_0$ is the number of non-zero elements in $\mathbf{L}$ and $t_1$ is the number of iterations required until convergence. In Line~8, one should compute the $\ell_2$ norm of every row of a $|\mathcal{E}|\times n$ matrix $\mathbf{N}$, so its complexity is $\mathcal{O}(|\mathcal{E}|n)$. Similarly, the complexity of Line~9 is $\mathcal{O}(lc)$. Note that a $m\times m$ matrix should be inverted in Line~10, so the complexity of this step is $\mathcal{O}(m^3)$. As the complexity of Line~17 is $\mathcal{O}(nmc)$, the total complexity of our SIIS algorithm is $\mathcal{O}\big((|\mathcal{E}|c+lc+m^3)t_2+n^2+(n_0+n)t_1+nmc\big)$ by assuming that Lines~7$\sim$16 are iterated $t_2$ times. Although this complexity is cubic to $m$, this parameter is practically set to a small positive value, so the complexity of our method is acceptable.

\section{Experimental Results}
\label{sec:experiments}

In this section, we first validate the motivation of the proposed algorithm (Section~\ref{sec:Validation}), and then compare our SIIS with several representative approaches on various practical datasets related to image, text, and speech (Sections~\ref{sec:ImageData}$\sim$\ref{sec:AudioData}). Finally, we empirically examine the convergence property of optimization (Section~\ref{sec:IllustrationOfConvergence}) and study the parametric sensitivity of SIIS (Section~\ref{sec:ParametricSensitivity}).

\subsection{Algorithm Validation}
\label{sec:Validation}
\begin{figure*}
  \centering
  \includegraphics[width=\linewidth]{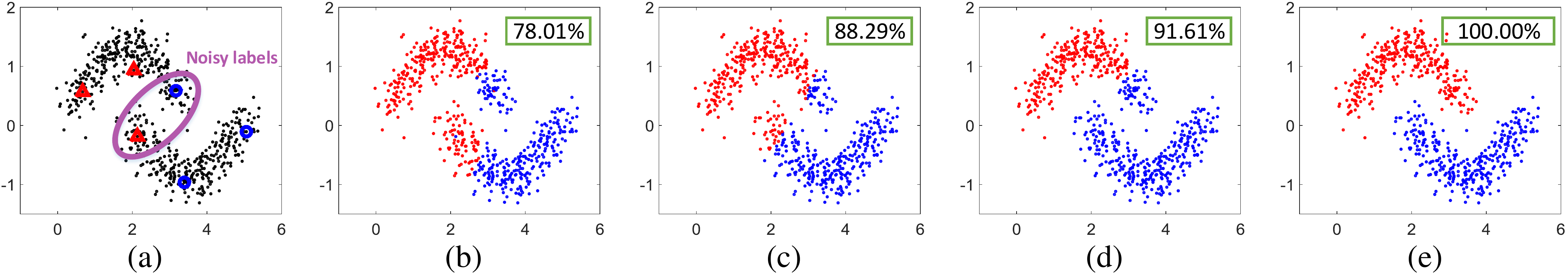}\\
  \caption{Algorithm validation on \emph{NoisyDoubleMoon} dataset. (a) shows the initial state with labeled positive examples (red triangles) and labeled negative examples (blue circles). Note that the labels of two labeled examples are incorrect and they form the noisy labels. (b)$\sim$(e) present the classification results of different settings.}\label{fig:NoisyDoubleMoon}
  \vskip -12pt
\end{figure*}

There are three critical components in our model \eqref{eq:OurModel3} for tackling the possible label noise: 1) $\ell_{1}$ norm is adopted to form the fidelity term, allowing the obtained solution $\mathbf{f}$ to be slightly inconsistent with the initial given labels in $\mathbf{y}$; 2) GTF term is employed to adaptively correct the possible label error in a local area, which is better than the global smoothness term $\mathrm{tr}(\mathbf{f}^{\top}\mathbf{Lf})$ that has been widely used by many existing methodologies; and 3) The SEP strategy is leveraged to recover the precise labels by emphasizing the $\mathbf{L}$'s leading eigenvectors with clear class indication. Therefore, here we use a two-dimensional toy dataset, i.e. \emph{NoisyDoubleMoon}, to visually show the strength of each of above three components.\par

\emph{NoisyDoubleMoon} consists of 640 examples, which are equally divided into two moons. This dataset is contaminated by the Gaussian noise with standard deviation 0.15, and each class has three initial labeled examples (see Fig.~\ref{fig:NoisyDoubleMoon}(a)). However, each class contains one erroneously labeled example (marked by the purple circle), which poses a great difficulty for an algorithm to achieve perfect classification.\par

Fig.~\ref{fig:NoisyDoubleMoon}(b) presents the result of Gaussian Field and Harmonic Functions (GFHF) \cite{Zhu03semisupervisedlearning}, of which the model is ${\mathop{\min }}_{\mathbf{f}:\mathbf{f}_{\mathcal{L}}=\mathbf{y}}\ \mathrm{tr}(\mathbf{f}^{\top}\mathbf{Lf})$. As GFHF requires the finally obtained $\mathbf{f}$ to be strictly identical to $\mathbf{y}$ on labeled set $\mathcal{L}$ and completely ignores the label noise, we see that it is greatly misled by the two incorrect labels and only obtains $78.01\%$ accuracy. In (c), we replace the equality constraint $\mathbf{f}_{\mathcal{L}}=\mathbf{y}$ in GFHF with a robust $\ell_{1}$ fidelity term, and the model is ${\mathop{\min }}_{\mathbf{f}}\ \mathrm{tr}(\mathbf{f}^{\top}\mathbf{Lf})+\alpha\left\|\mathbf{Jf}-\mathbf{y}\right\|_{1}$. We observe that this $\ell_{1}$ fidelity term greatly weakens the negative effect of mislabeled examples, and the performance can be improved to $88.29\%$, therefore the effectiveness of component 1) is verified. In (d), we further replace the traditional Laplacian smoother $\mathrm{tr}(\mathbf{f}^{\top}\mathbf{Lf})$ in (c) with GTF term to form the model ${\mathop{\min }}_{\mathbf{f}}\ \left\|\mathbf{Pf}\right\|_{1}+\alpha\left\|\mathbf{Jf}-\mathbf{y}\right\|_{1}$, and investigate the effect brought by GTF. It can be clearly observed that the false-positive labeled data point in the below moon has been corrected, and thus all negative examples have been successfully identified. Consequently, the classification accuracy has been improved to $91.61\%$, which means that GTF is helpful for eliminating the label noise. Finally, we illustrate the classification result yielded by the proposed model in (e). By preserving the top-2 eigenvectors of $\mathbf{L}$ to reconstruct $\mathbf{f}$, all examples are accurately classified, therefore component 3) contributes to enhance the robustness of our SIIS method. In short, every step included by SIIS is helpful for suppressing the adverse effect of noisy labels.


\subsection{Image Data}
\label{sec:ImageData}

\begin{figure}[t]
  \centering
  \includegraphics[width=0.9\linewidth]{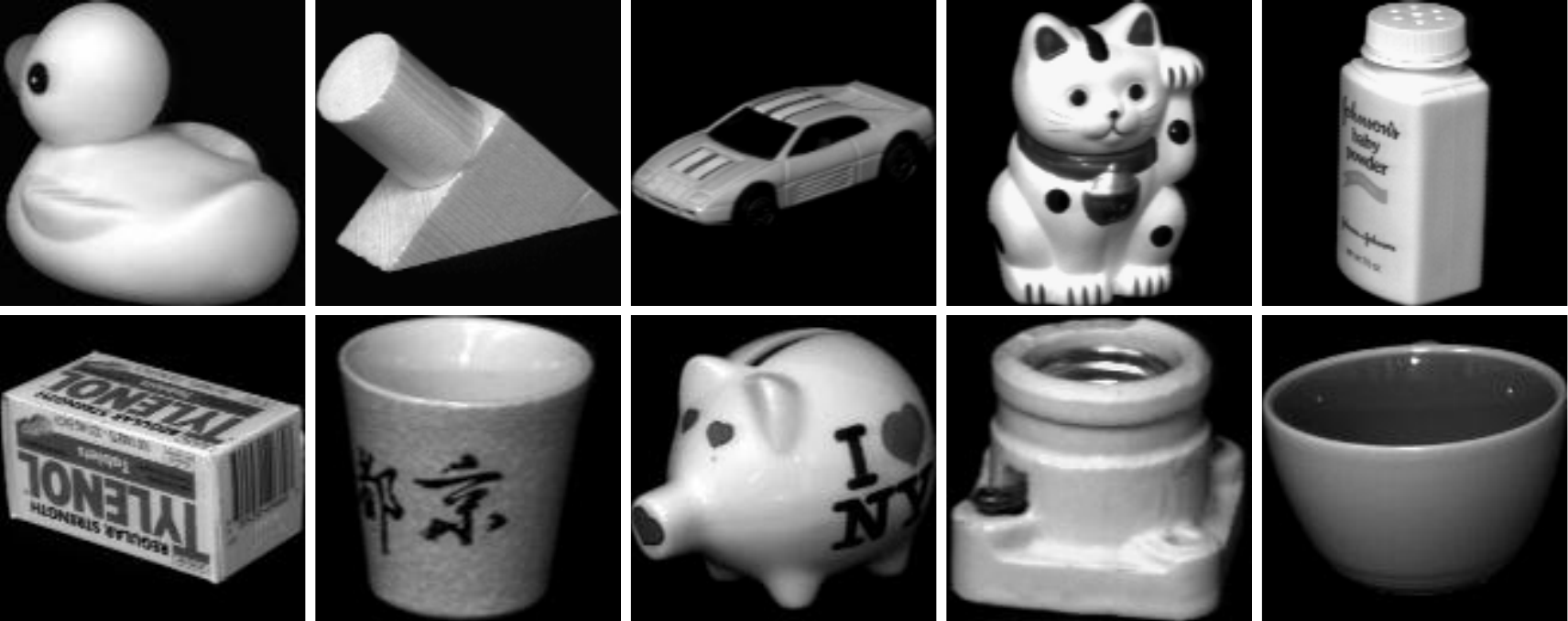}\\
  \caption{Example images of \emph{COIL} dataset.}\label{fig:ExamplesCOIL20}
  \vskip -10pt
\end{figure}

We firstly use the \emph{COIL} dataset \cite{nene1996columbia} to test the ability of our SIIS on processing image data. \emph{COIL} is a popular public dataset for object recognition which contains 1440 object images belonging to 20 classes (see Fig.~\ref{fig:ExamplesCOIL20} for some examples), and each object has 72 images shot from different angles. The resolution of each image is $32\times 32$, with 256 grey levels per pixel. Thus, every image is represented by a 1024-dimensional element-wise vector. We randomly select 10 examples from each class to establish the labeled set, and then the remaining image examples form the unlabeled set. To incorporate different levels of label noise to the dataset, we randomly pick up $0\%$, $20\%$, $40\%$ and $60\%$ examples from the totally $10\!\times\!20\!=\!200$ labeled examples, and switch the correct label of each of them to a random wrong label. Such label contamination is conducted 10 times, so every compared algorithm should independently run 10 times on the contaminated dataset and the average accuracy over these 10 different runs are particularly investigated.\par
The compared algorithms include: 1) the traditional supervised classifier SVM; 2) typical SSL method GFHF \cite{Zhu03semisupervisedlearning} which utilizes Laplacian smoother and does not consider the label noise, and 3) state-of-the-art label-noise robust SSL methodologies such as Large-Scale Sparse Coding (LSSC) \cite{lu2015noise}, Graph Trend Filtering (GTF) \cite{wang2016trend}, and Self-Paced Manifold Regularization (SPMR) \cite{gu2016robust}.\par

For fair comparison, all graph-based algorithms such as GFHF, LSSC, GTF, SPMR and SIIS are implemented on a 10-NN graph with Gaussian kernel width $\xi=100$. In SIIS, $\alpha$ and $\beta$ are set to 100 and 10, respectively, and the number of preserved eigenvectors is set to $m=30$. Similarly, the trade-off parameter for fidelity term in GTF is also tuned to 100, and the first 30 eigenvectors are employed to reconstruct the labels in LSSC. In SPMR, the parameters are tuned to $\gamma_{K}=1$ and $\gamma_{I}=0.01$ as suggested by \cite{gu2016robust}.\par

The classification accuracies of all compared methods on labeled set and unlabeled set are presented in Figs.~\ref{fig:Accuracy_COIL}(a) and (b), respectively. It can be observed that the performances of all algorithms decrease with the increase of label noise level. However, the proposed SIIS achieves the best results in most cases when compared with other baselines. Another notable fact is that SVM, which is a traditional supervised algorithm, performs worse than any of the SSL methods, therefore SSL adopted in this paper is more effective than supervised models when the supervision information is inadequate. When it comes to SSL approaches, we see that the performance of GFHF decreases dramatically when the noise level ranges from $0\%$ to $60\%$, which suggests that the $\ell_2$ norm-based Laplacian smoother would fail in the presence of contaminated labels. In contrast, some robust SSL methods including LSSC, GTF and our SIIS are very stable although the noise level increases rapidly, especially in the range $\left[0\%, 40\%\right]$. This indicates that although the very limited supervision information in SSL might be inaccurate, they can be well dealt with if some specific measures are taken.

\begin{figure}[t]
  \centering
  \includegraphics[width=\linewidth]{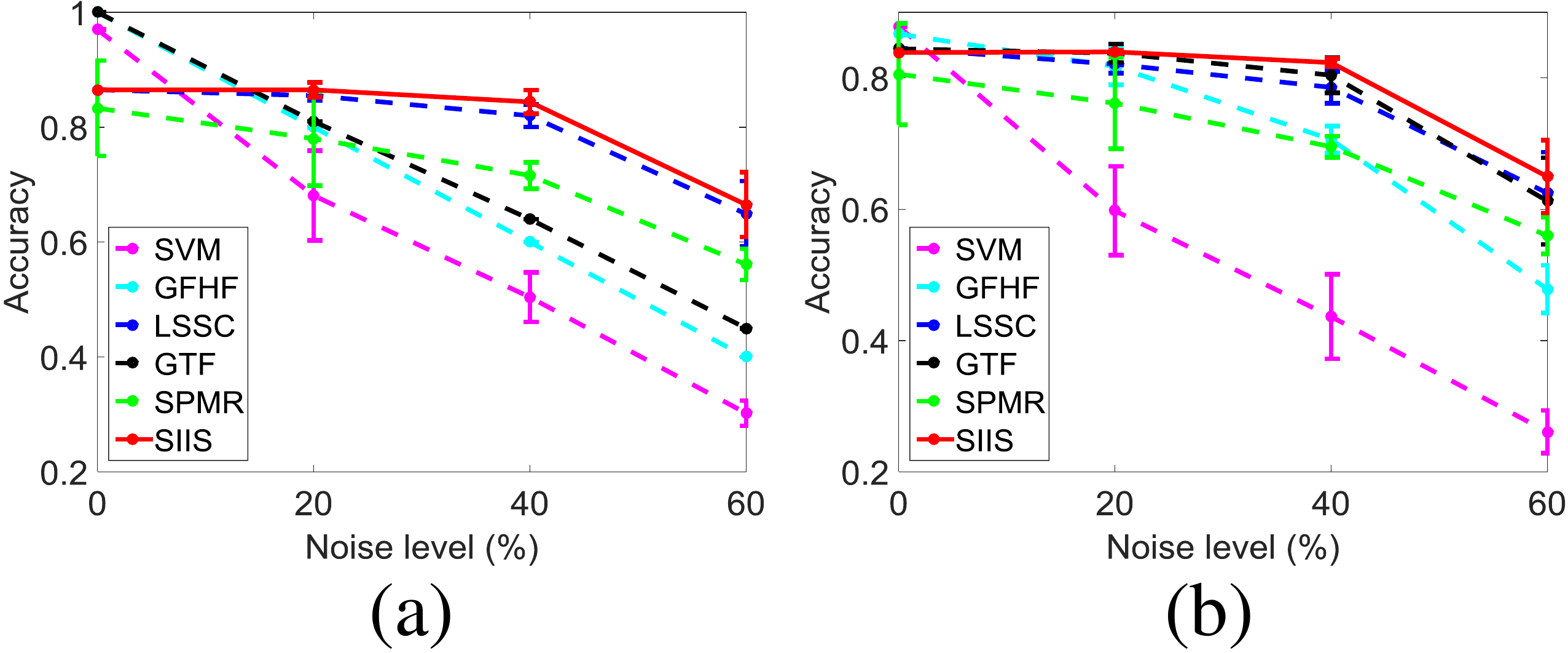}\\
  \caption{The comparison of various algorithms on \emph{COIL} dataset. (a) shows their classification accuracies on labeled set, and (b) plots the accuracies on unlabeled set.}\label{fig:Accuracy_COIL}
   \vskip -7pt
\end{figure}

\subsection{Text Data}
\label{sec:TextData}
This section compares the ability of SVM, GFHF, LSSC, GTF, SPMR, and the proposed SIIS on text categorization. Similar to \cite{cai2012manifold}, a subset of the \emph{Reuters Corpus Volume 1} (\emph{RCV1}) dataset \cite{lewis2004rcv1} is adopted for comparison, which contains 9625 news article examples 
across four classes (i.e. ``C15'', ``ECAT'', ``GCAT'', and ``MCAT''). As there are 29992 distinct words in this dataset, the standard TF-IDF weighting scheme is adopted to generate a 29992-dimensional feature vector for each example.\par

In this dataset, we randomly select 240 examples out of the totally 9625 examples to establish the labeled set, and the remaining 9385 examples are regarded as unlabeled. As a result, the labeled examples only accounts for approximately $2.5\%$ of the entire dataset, leading to the scarcity of the supervision information. Similar to Section~\ref{sec:ImageData}, here we also arbitrarily select $0\%\sim60\%$ labeled examples and then manually set their labels to the wrong ones. We are interested in the performances of compared methods under different levels of label noise.\par

The parameters of established graph are $K=10$ and $\xi=10$. The free parameters $\alpha$ and $\beta$ are tuned via searching the grid $\left[1,10,100,1000 \right]$, and their values are determined as $\alpha\!=\!1000$ and $\beta\!=\!10$. In Section~\ref{sec:ParametricSensitivity}, we will explain the reason for choosing such parametric setting. The performances rendered by the compared methods are displayed in Fig.~\ref{fig:Accuracy_RCV1}, in which (a) presents the classification accuracy on labeled examples and (b) shows the accuracy on unlabeled examples. From (a), we see that SVM, GFHF and GTF obtain almost $100\%$ accuracy on the labeled set $\mathcal{L}$ when there is no label noise. However, their accuracies decrease dramatically when we gradually increase the noise level, and they are worse than SIIS under heavy label noise in the range $\left[40\%, 60\%\right]$. As a consequence, they are inferior to SIIS in terms of the classification accuracy on unlabeled set $\mathcal{U}$ as reflected by (b). Besides, by comparing the accuracies of SIIS and GFHF when label noise presents, we see that SIIS leads GFHF with a noticeable margin no matter on the labeled set or unlabeled set. This demonstrates that SIIS with GTF term and SEP term is better than the GFHF with $\ell_2$ norm-based Laplacian smoother on amending the label errors. Furthermore, by comparing SIIS and GTF, we note that the performance of GTF can be remarkably improved by SIIS, and this again validates that the SEP in our SIIS plays an important role in filtering out the noisy labels. These observations are also consistent with our previous findings in Section~\ref{sec:Validation}.

\begin{figure}
  \centering
  \includegraphics[width=\linewidth]{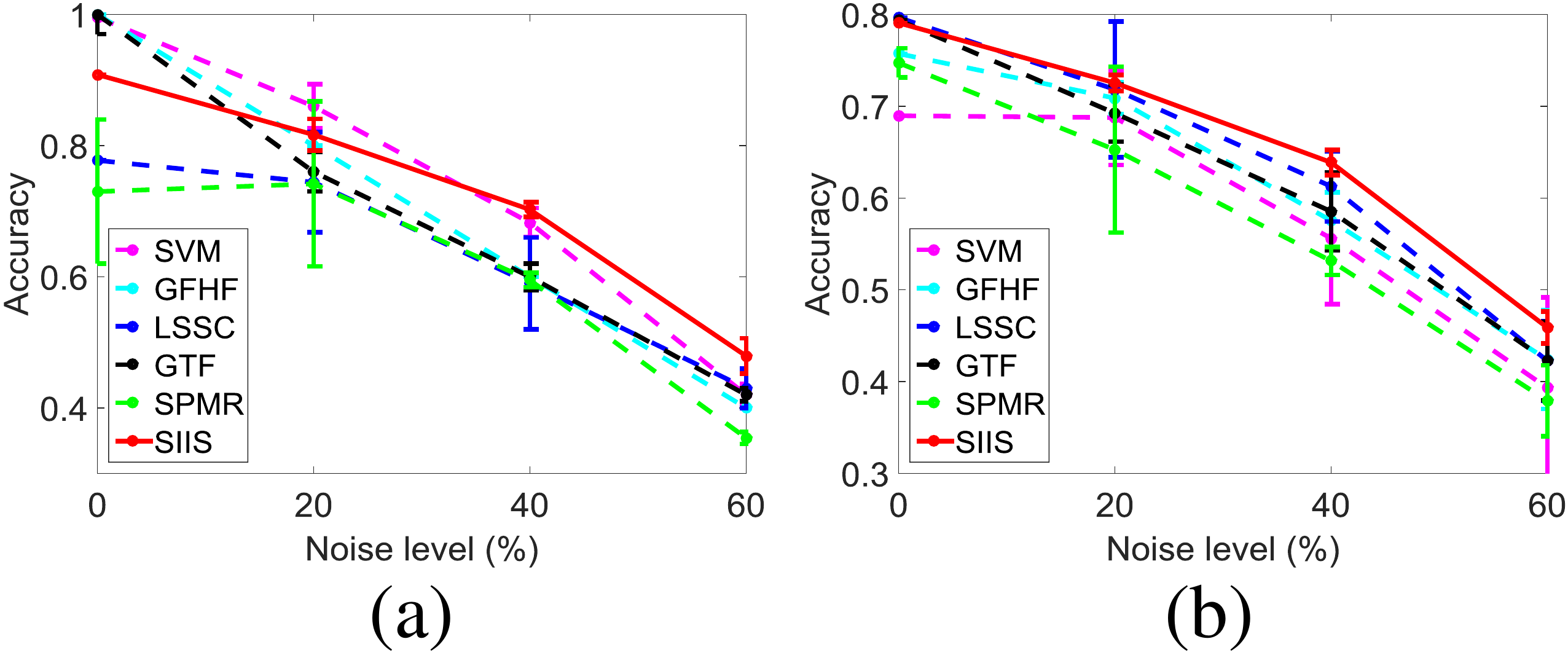}\\
  \caption{The comparison of various algorithms on \emph{RCV1} dataset. (a) shows their classification accuracies on labeled set, and (b) displays the accuracies on unlabeled set.}\label{fig:Accuracy_RCV1}
   \vskip -8pt
\end{figure}

\subsection{Audio Data}
\label{sec:AudioData}
In this experiment, we address a speech recognition task by using the \emph{ISOLET} dataset\footnote{\url{https://archive.ics.uci.edu/ml/datasets/ISOLET}}. In this dataset, 150 subjects are required to pronounce each letter in the alphabet (i.e. ``A''$\sim$``Z'') twice. Excluding 3 missing examples, we have totally $150\times2\times26-3=7797$ examples. Our task is to identify which of the 26 letters every example belongs to. Among the 7797 examples, we extract 40 examples from each class to form the labeled set with size 1040, and the rest 6757 examples are treated as unlabeled.\par

We vary the label noise level from 0\% to 60\%, and investigate the classification performances of SVM, GFHF, LSSC, GTF, SPMR and SIIS on labeled and unlabeled examples. From Tables~\ref{tab:ISOLET_Labeled} and \ref{tab:ISOLET_Unlabeled}, we see that SIIS performs comparably with GFHF and LSSC when the noise level is not larger than 40\%. However, if 60\% labeled examples are erroneously annotated, the advantage of SIIS becomes prominent among all compared algorithms. Specifically, SIIS reaches 77.4\% accuracy on $\mathcal{L}$ and 74.9\% accuracy on $\mathcal{U}$, respectively, which leads the second best algorithm LSSC with a gap 5.6\% and 7.2\% correspondingly. Besides, it can be found that the error rates of GFHF on $\mathcal{L}$ are equivalent to the noise rates such as $20\%$, $40\%$ and $60\%$, as this method requires the labels of original labeled examples to remain unchanged during the classification. In contrast, our SIIS generates 86.4\%, 83.6\% and 77.4\% accuracy on $\mathcal{L}$ when 20\%, 40\% and 60\% labels are not correct, which means that 6.4\%, 23.6\% and 37.4\% deteriorated labels have been corrected correspondingly, and this is the reason that our method is able to obtain satisfactory performance on classifying the unlabeled examples in the presence of heavy label noise. Specifically, SIIS touches 74.9\% accuracy on unlabeled examples although more than half (i.e. 60\%) of the available labels are inaccurate, which is an impressive result when compared with other existing methodologies.

\begin{table*}[t]\small
\setlength\tabcolsep{15pt}
\caption{The comparison of various methods on \emph{ISOLET} dataset. The classification accuracies on labeled examples are presented. The best record under each label noise level is marked in bold.} \label{tab:ISOLET_Labeled}
\begin{center}
\begin{tabular}{|c|c|c|c|c|}
\hline
\backslashbox{Methods\kern-2em}{Noise level\kern-1.3em} & $0\%$ &  $20\%$ & $40\%$ & $60\%$ \\
\hline\hline
SVM          & 0.943 $\pm$ 0.000 & 0.813 $\pm$ 0.012 & 0.687 $\pm$ 0.022 & 0.521 $\pm$ 0.015 \\
GFHF         & \textbf{1.000 $\pm$ 0.000} & 0.800 $\pm$ 0.000 & 0.600 $\pm$ 0.000 & 0.400 $\pm$ 0.000 \\
LSSC         & 0.899 $\pm$ 0.000 & 0.877 $\pm$ 0.003 & 0.829 $\pm$ 0.009 & 0.718 $\pm$ 0.017 \\
GTF          & 0.958 $\pm$ 0.000 & 0.798 $\pm$ 0.007 & 0.633 $\pm$ 0.004 & 0.553 $\pm$ 0.006 \\
SPMR         & 0.685 $\pm$ 0.000 & 0.638 $\pm$ 0.008 & 0.635 $\pm$ 0.004 & 0.548 $\pm$ 0.005 \\
SIIS         & 0.911 $\pm$ 0.000 & \textbf{0.905 $\pm$ 0.008} & \textbf{0.836 $\pm$ 0.010} & \textbf{0.774 $\pm$ 0.010} \\
\hline
\end{tabular}
\end{center}
\vskip -7pt
\end{table*}

\begin{table*}[t]\small
\setlength\tabcolsep{15pt}
\caption{The comparison of various methods on \emph{ISOLET} dataset. The classification accuracies on unlabeled examples are presented. The best record under each label noise level is marked in bold.} \label{tab:ISOLET_Unlabeled}
\begin{center}
\begin{tabular}{|c|c|c|c|c|}
\hline
\backslashbox{Methods\kern-2em}{Noise level\kern-1.3em} & $0\%$ &  $20\%$ & $40\%$ & $60\%$ \\
\hline\hline
SVM          & 0.851 $\pm$ 0.000 & 0.805 $\pm$ 0.011 & 0.729 $\pm$ 0.021 & 0.594 $\pm$ 0.017 \\
GFHF         & \textbf{0.865 $\pm$ 0.000} & 0.816 $\pm$ 0.004 & 0.797 $\pm$ 0.010 & 0.674 $\pm$ 0.015 \\
LSSC         & 0.848 $\pm$ 0.000 & 0.828 $\pm$ 0.003 & 0.785 $\pm$ 0.006 & 0.677 $\pm$ 0.018 \\
GTF          & 0.701 $\pm$ 0.000 & 0.699 $\pm$ 0.002 & 0.598 $\pm$ 0.003 & 0.548 $\pm$ 0.005 \\
SPMR         & 0.689 $\pm$ 0.000 & 0.638 $\pm$ 0.008 & 0.627 $\pm$ 0.004 & 0.539 $\pm$ 0.005 \\
SIIS         & 0.854 $\pm$ 0.000 & \textbf{0.849 $\pm$ 0.006} & \textbf{0.802 $\pm$ 0.013} & \textbf{0.749 $\pm$ 0.014} \\
\hline
\end{tabular}
\end{center}
\vskip -10pt
\end{table*}

\subsection{Illustration of Convergence}
\label{sec:IllustrationOfConvergence}

\begin{figure}
  \centering
  \includegraphics[width=\linewidth]{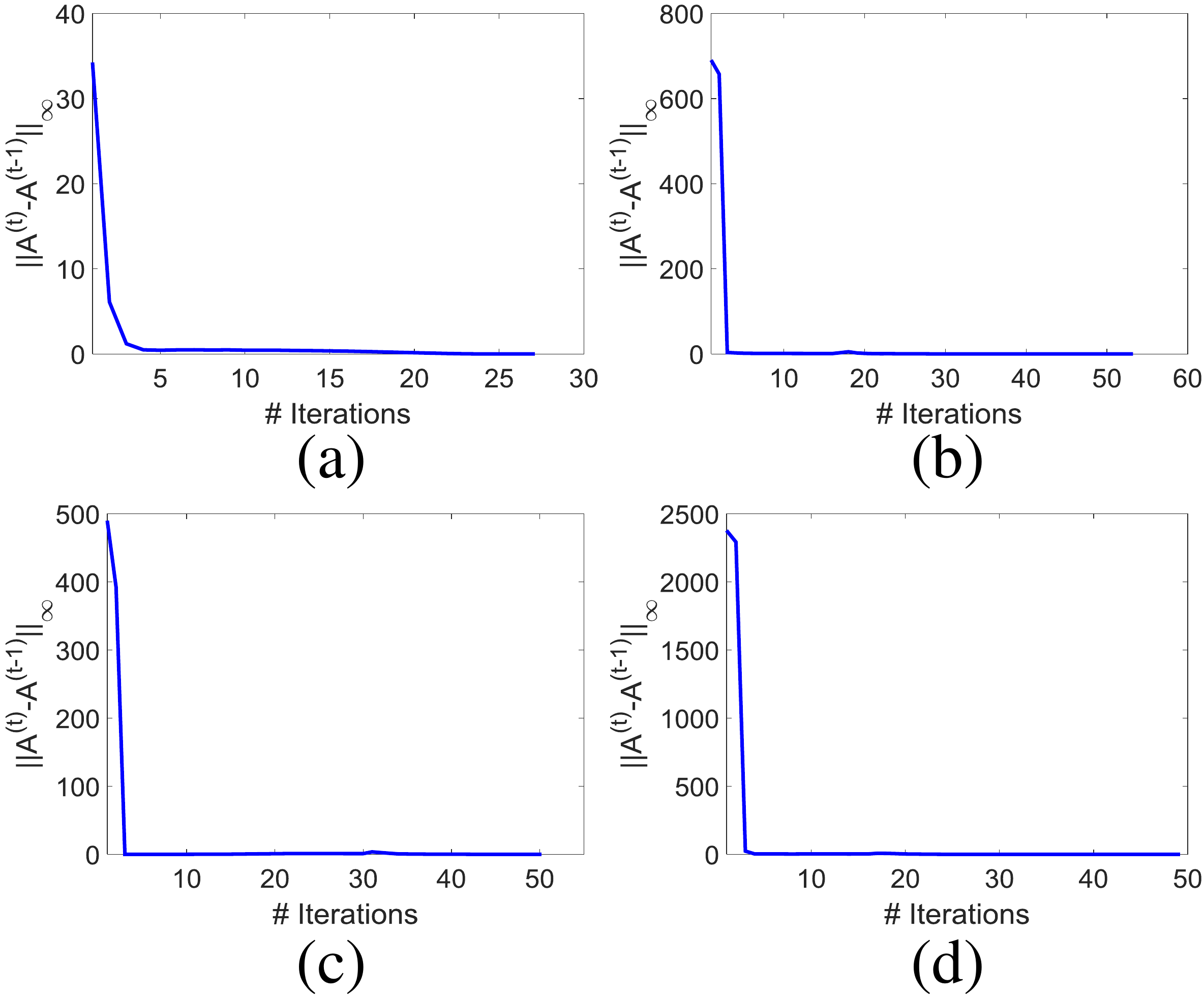}\\
  \caption{The convergence process of the ADMM method adopted by our SIIS algorithm. (a) is \emph{NoisyDoubleMoon} dataset, (b) is \emph{COIL} dataset, (c) is \emph{RCV1} dataset, and (d) is \emph{ISOLET} dataset.}\label{fig:ConvergenceCurve}
  \vskip -12pt
\end{figure}

In Section~\ref{sec:Convergence}, we have theoretically proved that the optimization process in our method will converge to a stationary point. In this section, we present the convergence curves of SIIS on the four datasets appeared from Section~\ref{sec:Validation} to Section~\ref{sec:AudioData}. From the curves in Fig.~\ref{fig:ConvergenceCurve}, we see that the difference between the successive $\mathbf{A}$ decreases rapidly on all four datasets. This observation justifies our previous theoretical results and demonstrates that ADMM is effective for solving the SIIS model in \eqref{eq:OurModel4}.

\subsection{Parametric Sensitivity}
\label{sec:ParametricSensitivity}
Note that the objective function \eqref{eq:OurModel4} in our method contains two trade-off parameters $\alpha$ and $\beta$ that should be manually tuned. Therefore, in this section we discuss whether the choices of them will significantly influence the performance of SIIS. To this end, we examine the classification accuracy of SIIS on unlabeled set by varying one of $\alpha$ and $\beta$, and meanwhile fixing the other one to a constant value. The three practical datasets from Sections~\ref{sec:ImageData} to \ref{sec:AudioData} are adopted including \emph{COIL}, \emph{RCV1}, and \emph{ISOLET}. By changing $\alpha$ and $\beta$ from $10^0$ to $10^3$, the results under 40\% label noise on the three datasets are shown in Fig.~\ref{fig:ParametricSensitivity}. From the experimental results, we learn that the these two parameters are critical for our algorithm to achieve good performance. To be specific, $\alpha$ is suggested to choose a relatively large number, such as 100 on \emph{COIL} and \emph{ISOLET} datasets, and 1000 on \emph{RCV1} dataset. According to Fig.~\ref{fig:ParametricSensitivity}(b), we see that a small $\beta$ is preferred to obtain high accuracy, therefore this parameter is set to 10, 10, and 1 on \emph{COIL}, \emph{RCV1} and \emph{ISOLET}, respectively.

\begin{figure}[t]
  \centering
  \includegraphics[width=\linewidth]{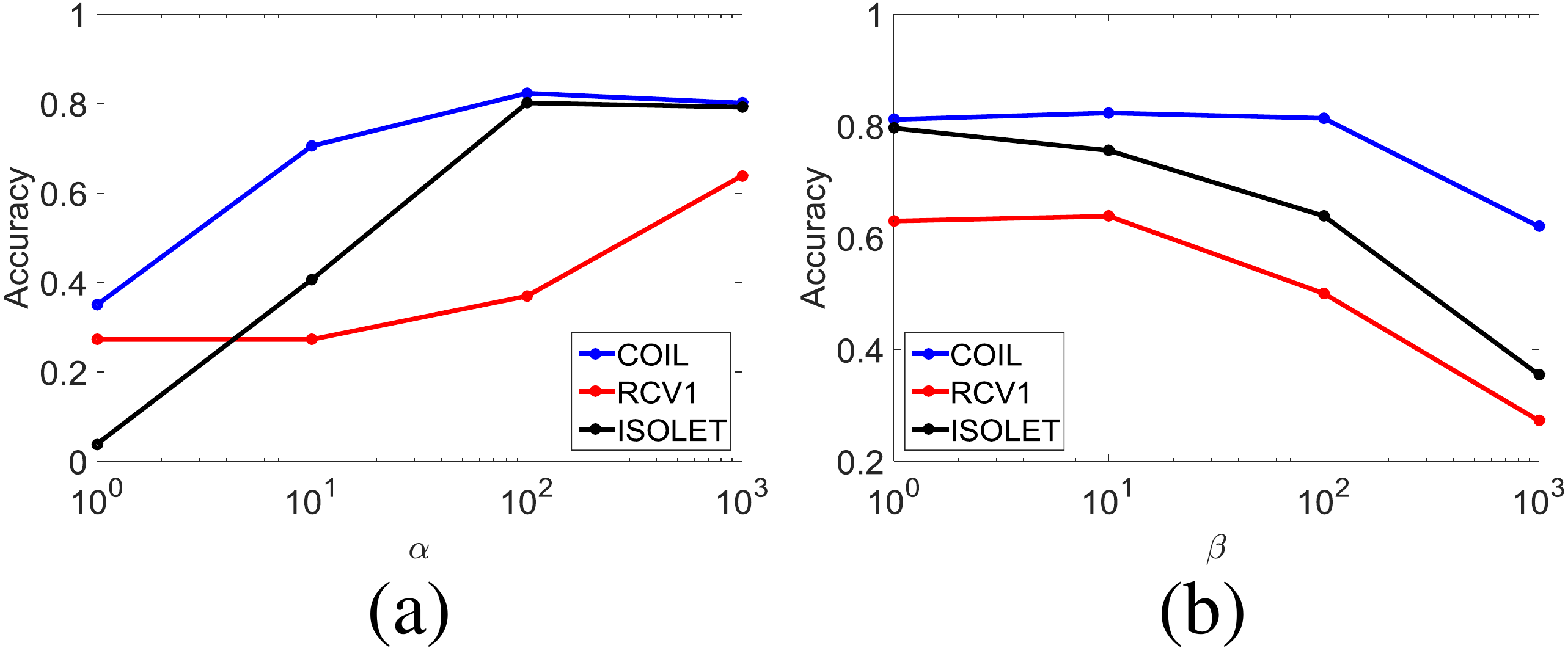}\\
  \caption{Parametric sensitivity of the proposed SIIS. (a) and (b) plot the accuracy of SIIS w.r.t. the variation of $\alpha$ and $\beta$, respectively.}\label{fig:ParametricSensitivity}
  \vskip -10pt
\end{figure}

\section{Conclusion}
To solve the label shortage and label inaccuracy that often occur in many real-world problems, this paper proposed a novel graph-based SSL algorithm dubbed ``Semi-supervised learning under Inadequate and Incorrect Supervision'' (SIIS). Two measures, namely graph trend filtering and smooth eigenbase pursuit, are formulated into a unified optimization framework to tackle the label errors. This optimization model was solved via the Alternating Direction Method of Multipliers, of which the convergence has been theoretically proved. We tested our SIIS on image, text and audio datasets under different levels of label noise, and found that SIIS performs robustly to label noise and achieves superior performance to other compared baseline methods. In particular, SIIS is able to obtain very encouraging results when more than half of the limited labeled examples are mislabeled, which further demonstrates the effectiveness and robustness of the proposed algorithm.





%

%
%

{
\bibliographystyle{IEEEtran}
\bibliography{TLLTRef}
}

\end{document}